% CVPR 2023 Paper Template
% based on the CVPR template provided by Ming-Ming Cheng (https://github.com/MCG-NKU/CVPR_Template)
% modified and extended by Stefan Roth (stefan.roth@NOSPAMtu-darmstadt.de)

\documentclass[10pt,twocolumn,letterpaper]{article}

%%%%%%%%% PAPER TYPE  - PLEASE UPDATE FOR FINAL VERSION
\usepackage{cvpr}      % To produce the REVIEW version
%\usepackage{cvpr}              % To produce the CAMERA-READY version
%\usepackage[pagenumbers]{cvpr} % To force page numbers, e.g. for an arXiv version

% Include other packages here, before hyperref.
\usepackage{graphicx}
\usepackage{amsmath}
\usepackage{amssymb}
\usepackage{booktabs}
\usepackage{amsmath,amsfonts,amsthm}
\usepackage{amsmath,bm}
\usepackage{amssymb}
\usepackage{graphicx}
\usepackage[ruled,vlined]{algorithm2e}
\usepackage{enumitem}
\usepackage{caption}
\usepackage{multirow}
\usepackage{booktabs}

% It is strongly recommended to use hyperref, especially for the review version.
% hyperref with option pagebackref eases the reviewers' job.
% Please disable hyperref *only* if you encounter grave issues, e.g. with the
% file validation for the camera-ready version.
%
% If you comment hyperref and then uncomment it, you should delete
% ReviewTempalte.aux before re-running LaTeX.
% (Or just hit 'q' on the first LaTeX run, let it finish, and you
%  should be clear).
\usepackage[pagebackref,breaklinks,colorlinks]{hyperref}

\newtheorem{theorem}{Theorem}
\newtheorem{lemma}{Lemma}

\DeclareMathOperator*{\argmin}{argmin}

\newcommand{\w}{\bm{w}}
\newcommand{\x}{\bm{x}}

\newcommand{\comment}[1]{}

\newcommand{\cB}{\mathcal{B}}

\newcommand{\cD}{\mathcal{D}}

\newcommand{\cL}{\mathcal{L}}
\newcommand{\cS}{\mathcal{S}}

\newcommand{\cN}{\mathcal{N}}

\newcommand{\bbR}{\mathbb{R}}
\newcommand{\bbE}{\mathbb{E}}

\newcommand{\bbI}{\mathbb{I}}

\newcommand{\adv}{\textsf{adv}}
\newcommand{\rob}{\textsf{rob}}
\newcommand{\nat}{\textsf{nat}}
\newcommand{\batch}{\textsf{batch}}
\newcommand{\sign}{\textrm{sign}}

% Support for easy cross-referencing
\usepackage[capitalize]{cleveref}
\crefname{section}{Sec.}{Secs.}
\Crefname{section}{Section}{Sections}
\Crefname{table}{Table}{Tables}
\crefname{table}{Tab.}{Tabs.}

%%%%%%%%% PAPER ID  - PLEASE UPDATE
 % *** Enter the CVPR Paper ID here

\begin{document}

%%%%%%%%% TITLE - PLEASE UPDATE
\title{Towards Robust Dataset Learning}
\author{Yihan Wu\\
	University of Pittsburgh\\
	% PA, United States\\
	% {\tt\small yiw154@pitt.edu}
	% For a paper whose authors are all at the same institution,
	% omit the following lines up until the closing ``}''.
	% Additional authors and addresses can be added with ``\and'',
	% just like the second author.
	% To save space, use either the email address or home page, not both
	\and
	Xinda Li\\
	University of Waterloo\\
	% ON, Canada\\
	% {\tt\small secondauthor@i2.org}
	\and
	Florian Kerschbaum\\
	University of Waterloo\\
	% ON, Canada\\
	\and
	Heng Huang\\
	University of Pittsburgh\\
	% PA, United States\\
	\and
	Hongyang Zhang\\
	University of Waterloo\\
	% ON, Canada\\
}
\maketitle

%%%%%%%%% ABSTRACT
\begin{abstract}
Adversarial training has been actively studied in recent computer vision research to improve the robustness of models. However, due to the huge computational cost of generating adversarial samples, adversarial training methods are often slow. 
In this paper, we study the problem of learning a robust dataset such that any classifier naturally trained on the dataset is adversarially robust.
Such a dataset benefits the downstream tasks as the natural training is much faster than adversarial training, and demonstrates that the desired property of robustness is transferable between models and data. In this work, we propose a principled tri-level optimization to formulate the robust dataset learning problem. We show that, under an abstraction model that characterizes robust vs. non-robust features, the proposed method provably learns a robust dataset. Extensive experiments on benchmark datasets demonstrate the effectiveness of our new algorithm with different network initializations and architectures.
\end{abstract}

%%%%%%%%% BODY TEXT
\section{Introduction}
\vspace{-5pt}
Deep learning models are vulnerable to adversarial examples \cite{szegedy2013intriguing,biggio2013evasion}: an adversary can arbitrarily manipulate the prediction results of deep neural networks with slight perturbations to the data. Many defense approaches, including heuristic defenses \cite{papernotDistillationDefenseAdversarial2016,xie2017mitigating,kannanAdversarialLogitPairing2018,liao2018defense,carmon2019unlabeled,mustafa2019adversarial,zhang2019theoretically,Zhang2020AttacksWD,Wu2020AdversarialWP,dong2020adversarial, tramer2020adaptive}
and certified defenses \cite{raghunathan2018certified,wong2018provable,wong2018scaling,singh2018fast,xiao2018training,gowal2018effectiveness,lecuyer2019certified,croce2019provable,li2019certified,cohen2019certified,zhang2020black,xu2020automatic,zhai2020macer,balunovic2020adversarial,zhang2021towards}, have been developed to protect deep learning models from these threats. The focus of this paper is on integrating the property of adversarial robustness into a dataset, such that a robust model (against small perturbation to the original test data) can be obtained through \emph{natural training} on the learned dataset.

There are several reasons of studying this problem. 1) Discovered by \cite{ilyas2019adversarial} in their seminal work, the desirable property of adversarial robustness is transferable between models and data. We propose a \emph{principled} approach of robust feature extraction with theoretical guarantees and improved empirical performance. 2) Expensive computational cost of most existing defenses hinders their applicability to the scenarios of limited computational resources. Although the task of learning robust dataset might itself be time-consuming, once the \emph{one-time} task has been outsourced to Machine Learning as a Service (MLaaS), 
one can benefit from the robust dataset for fast training of their own customized robust models, as natural training only requires light computational cost. 3) Distributing a robust dataset is more flexible than distributing a robust model. This is because loading a robust model requires extensive compatibility among deep learning framework (e.g., PyTorch, TensorFlow, MXNet, Keras, etc.), network architecture, and checkpoint. On the other hand, distributing a robust dataset allows everyone to train a network with their preferred architecture and deep learning framework for downstream tasks. Moreover, a robust dataset can be small, e.g., of size only 10\% of the original dataset, making it easy to transmit.

% Distilling knowledge from dataset 

However, there are only few works on robust dataset learning. A related but orthogonal research topic is dataset distillation \cite{wang2018dataset,cazenavette2022dataset}, which aims at reducing the scale of dataset by distilling knowledge from a large dataset to a small dataset. Despite dataset distillation has become a popular research topic in machine learning with
various applications \cite{bohdal2020flexible,nguyen2020dataset,sucholutsky2021soft,zhao2021dataset,zhao2021dataset2,nguyen2021dataset,cazenavette2022dataset}, how to learn a \emph{robust} dataset is less explored. To our best knowledge, the only attempt on building a robust dataset is by \cite{ilyas2019adversarial} on robust feature extraction. Thus, more theoretical and empirical results are desired for an in-depth understanding of robust dataset learning.

The idea behind our robust dataset learning is to represent a classifier as a function of a dataset, so that one can treat the dataset as a learnable parameter of the classifier. Throughout the paper, we name such a classifier the \emph{data-parameterized classifier}. We formulate robust dataset learning as a min-max, tri-level optimization problem. We theoretically show the efficiency of our algorithm, and empirically verify our robust dataset learning algorithm on MNIST, CIFAR10, and TinyImageNet datasets.
% In our work, we formulate robust dataset learning by the following min-max optimization problem
% \begin{equation}
% \label{equ: property-driven dataset learning}
% \begin{split}
% &\min_{\{x_i^\rob\}_{i=1}^{n_2}} \frac{1}{n_1}\sum_{i=1}^{n_1}\max_{x_i^\adv\in \mathcal{B}(x_i,\epsilon)} \mathcal{L}(f_{\theta(\{x_i^\rob\}_{i=1}^{n_2})}(x_i^\adv), y_i),\\
% \quad\text{s.t.}\quad &\theta(\{x_i^\rob\}_{i=1}^{n_2})=\argmin_\theta \frac{1}{n_2}\sum_{i=1}^{n_2}\mathcal{L}(f_\theta(x_i^\rob), y_i),
% \end{split}
% \end{equation}
% where $f_\theta$ stands for a neural network parameterized by $\theta$,  $\{(x_i,y_i)\}_{i=1}^{n_1}$ are the natural training data pairs, $\mathcal{L}$ is a loss function such as the cross entropy loss or hinge loss, and $\{(x_i^\rob,y_i)\}_{i=1}^{n_2}$ is the dataset to be learned such that the neural network fine-tuned naturally on the dataset is adversarially robust.
\begin{figure*}[t]
    \centering
    \includegraphics[height=6.5cm]{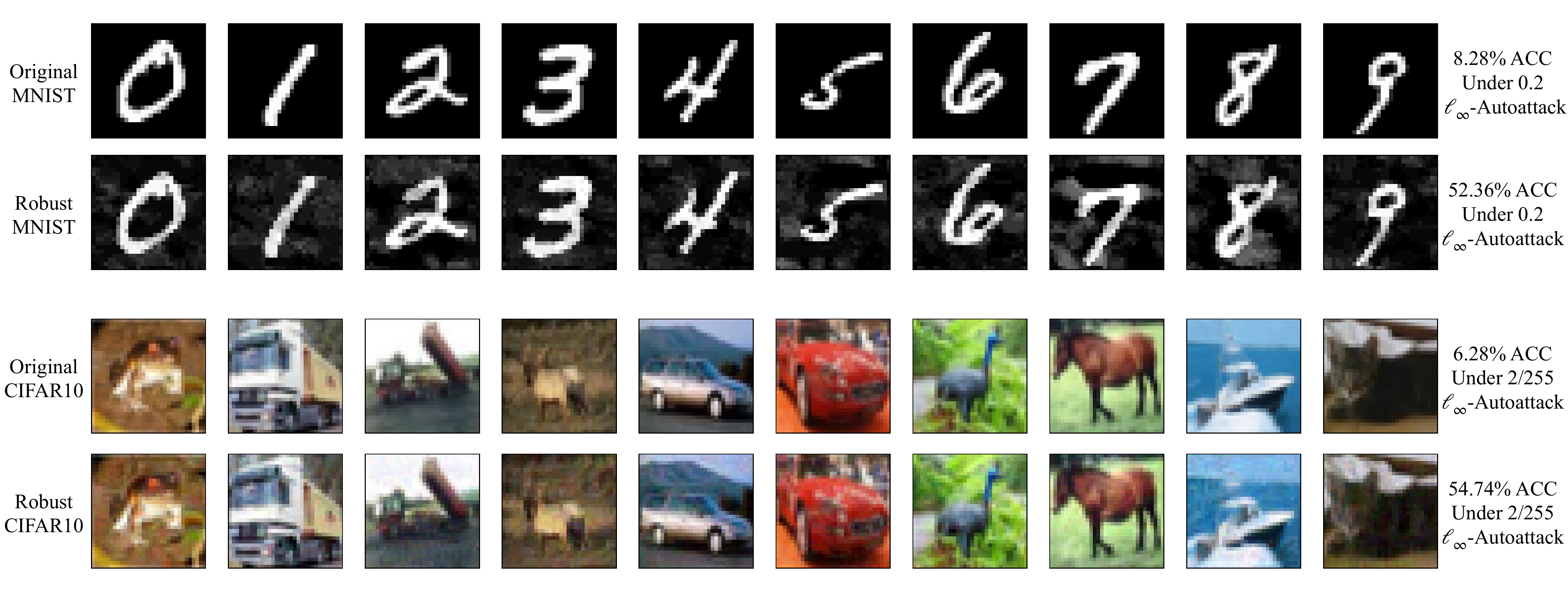}
    \vspace{-0.3cm}
    \caption{Illustration of original and robust images. For MNIST and CIFAR10, the first row represents the original images, while the second row represents the robust dataset generated by our algorithm. The rightmost column shows the robust accuracy.}
    \label{fig:compare}
    \vspace{-10pt}
\end{figure*}

\medskip
\noindent\textbf{Summary of contributions.}
Our work tackles the problem of robust dataset learning and advances the area in the following ways.

\begin{itemize}

\item Algorithmically, we formulate robust dataset learning via a tri-level optimization problem, where we parameterize the model weights by data, find adversarial examples of the model, and optimize adversarial loss over the data. This learning objective encourages the algorithm to maximize both clean and robust accuracy of the data-parameterized classifier.

\item  Theoretically, we investigate this tri-level optimization problem under an abstraction model that characterizes robust vs. non-robust features~\cite{tsipras2018robustness}, where the objective is to find a dataset that minimizes robust error on the data-parameterized classifier. We show that while the classifier naturally trained on clean dataset is non-robust, our data-parameterized classifier (trained on the robust dataset) is provably robust.

\item  Experimentally, we evaluate the clean and robust accuracy of our algorithm on MNIST, CIFAR10, and TinyImageNet. We consider baselines for robust dataset learning, which use datasets generated through adversarial attacks or robust feature extraction \cite{ilyas2019adversarial}. We show that our algorithm outperforms the baselines by a large margin. For example, on the CIFAR10 dataset with 0.25 $\ell_2$ threat model, our method achieves robust accuracy as high as 59.52\% under AutoAttack, beating the state-of-the-art 48.20\% in the same setting by a large margin.

\end{itemize}
\section{Related Works}
\noindent\textbf{Dataset distillation.} The ultimate goal of dataset distillation is to reduce training dataset size by distilling knowledge from the data. \cite{wang2018dataset}~proposed the first dataset distillation algorithm, which expressed the model parameters with the distilled images and optimized the images using gradient descent method. The subsequent works significantly improved the results by various strategies, such as learning soft labels \cite{sucholutsky2021soft}, strengthening learning signal through gradient matching \cite{zhao2021dataset}, adopting differentiable Siamese augmentation \cite{zhao2021dataset2}, optimizing with the neural tangent kernel under infinite-width limit \cite{nguyen2021dataset}, and matching training trajectories \cite{cazenavette2022dataset}.  Dataset distillation has been applied to many machine learning fields, including  federated learning \cite{sucholutsky2020secdd,zhou2020distilled}, privacy-preserving ML \cite{li2020soft}, and neural architecture search \cite{zhao2021dataset2}. 

\medskip
\noindent\textbf{Robust feature extraction.} \cite{ilyas2019adversarial} studied the existence and pervasiveness of adversarial examples. They theoretically demonstrated that adversarial examples are related to the presence of non-robust features, which are in fact highly predictive to neural networks, but brittle and imperceptible to humans. They proposed an empirical algorithm to separate the robust and non-robust features in the data to verify their theoretical results. A by-product of their algorithm is to obtain a robust model through natural training on the robust features. However, they did not provide a principled approach for robust dataset learning.

% After capturing these features within a theoretical framework, we establish their widespread existence in standard datasets. Finally, we present a simple setting where we can rigorously tie the phenomena we observe in practice to a misalignment between the (human-specified) notion of robustness and the inherent geometry of the data.

\section{Learning Robust Dataset: A Principled Approach}
% Motivated by our theoretical analysis in \autoref{sec:glimpse on SVM},
In this section, we propose a principled optimization formulation for robust dataset learning.

\medskip
\noindent\textbf{Problem settings.} There are two essential settings for the robust dataset learning problem: 1) construct a dataset; 2) obtain a robust model with only natural training on the constructed dataset.

\medskip
\noindent\textbf{Goal.}
 Given a original dataset $X^\nat$, The optimization objective is to find an optimal robust dataset $X^\rob$ such that the neural network that is naturally trained on $X^\rob$ is robust against adversarial perturbation to the clean test data, where the test data follows the same distribution as $X^\nat$.

\medskip
\noindent\textbf{Optimization.}
 The idea behind our method is to represent a classifier as a function of a dataset and to find the optimal dataset such that the classifier is robust against adversarial perturbations. With this idea, we formulate robust dataset learning as a tri-level optimization problem. Denote by $X^\nat:=\{(x_i,y_i)\}_{i=1}^{n}$ the original training data pairs, and by $X^\rob :=\{(x_i^\rob,y_i)\}_{i=1}^{n}$ the robust dataset to be learned. Notice we only optimize the data points $\{x_i^\rob\}_{i=1}^{n}$ and keep the labels $\{y_i\}_{i=1}^{n}$ unchanged.
% Following the intuition of problem \autoref{eq:property-driven dataset learning svm},
Step 1). For a given loss $\mathcal{L}$, in the first level we create a data-parameterized classifier $f_{\theta(X^\rob)}$ through minimizing the loss on $X^\rob$, which is initialized by $X^\nat$ and updated by gradient descent; Step 2). In the second level, we calculate the adversarial samples $X^\adv:=\{(x_i^\adv,y_i)\}_{i=1}^{n}$ of $X^\nat$ by attacking $f_{\theta(X^\rob)}$; Step 3). In the third level, we search for the optimal $X^\rob$ which minimizes the loss of $f_{\theta(X^\rob)}$ on $X^\adv$. 
With the above steps, our optimization problem is given by
% \hy{explain how the following optimization relates to \autoref{eq:property-driven dataset learning svm} in the SVM case}
% \begin{equation}
% \label{equ: property-driven dataset learning}
% \begin{split}
% \min_{\widetilde{X}} \frac{1}{n}\sum_{i=1}^{n} \ell(x_i, \theta(\widetilde{X})),
% \ \text{s.t.}\  \theta(\widetilde{X})=\argmin_{\theta} \frac{1}{n}\sum_{i=1}^{n}\ell(\widetilde{x}_i, \theta),
% \end{split}
% \end{equation}
\begin{equation}
\label{equ: property-driven dataset learning}
\begin{split}
&\min_{X^\rob} \frac{1}{n}\sum_{i=1}^{n}\max_{x_i^\adv\in \mathcal{B}(x_i,\epsilon)}\hspace{-0.3cm} \mathcal{L}(f_{\theta(X^\rob)}(x_i^\adv), y_i),
\\ &\text{s.t.}\  \theta(X^\rob)=\argmin_{\theta} \frac{1}{n}\sum_{i=1}^{n}\mathcal{L}(f_\theta(x_i^\rob), y_i),
\end{split}
\end{equation}
% \begin{equation}
% \label{equ: property-driven dataset learning}
% \begin{split}
% \min_{\theta} \frac{1}{n}\sum_{i=1}^{m}\max_{x_i^\adv\in \mathcal{B}(\widetilde{x}_i,\epsilon)}\hspace{-0.3cm} \mathcal{L}(f_{\theta}(x_i^\adv), y_i),
% \end{split}
% \end{equation}
% \begin{equation}
% \label{equ: property-driven dataset learning}
% \begin{split}
% \min_{\{x_i^\rob\}_{i=1}^{n_2}} \hspace{-0.1cm}\frac{1}{n}\sum_{i=1}^{n}\hspace{-0.05cm}\max_{x_i^\adv\in \mathcal{B}(x_i,\epsilon)} \hspace{-0.3cm}\mathcal{L}(f_{\theta(\{x_i^\rob\}_{i=1}^{n_2})}(x_i^\adv), y_i),
% \ \text{s.t.}\  \theta(\{x_i^\rob\}_{i=1}^{n_2})\hspace{-0.1cm}=\hspace{-0.1cm}\argmin_{\theta} \hspace{-0.1cm}\frac{1}{n_2}\hspace{-0.05cm}\sum_{i=1}^{n_2}\mathcal{L}(f_\theta(x_i^\rob), y_i),
% \end{split}
% \end{equation}
where $f_\theta$ stands for a neural network parameterized by $\theta$, $\cB(x_i,\epsilon)$ stands for a ball centered at $x_i^{\nat}$ with radius $\epsilon$, and $\mathcal{L}$ is a loss function, e.g., the cross entropy loss or hinge loss.

\medskip
\noindent\textbf{Efficient algorithm.}
However, \autoref{equ: property-driven dataset learning} is hard to be solved as it is a tri-level optimization problem. Denote by $t$ the current learning epoch.
The main difficulty is to find the closed form of the parameterized weight $\theta(X^\rob)$ which minimizes the loss w.r.t. $\{(x_i^\rob,y_i)\}_{i=1}^{n}$. Because the non-linearity of neural network, the closed form of $\theta(X^\rob)$ is always intractable,
empirically we use tens of thousands of gradient descent  to approximate $\theta(X^\rob)$. However, a memory issue will occur if we store all the gradient information during training. Thus, we use one-step gradient update to estimate $\theta(X^\rob)$
% We consider the second-order Taylor expansion of the loss $\mathcal{L}(f_\theta(x_i^\rob), y_i)$ at a predefined $\theta=\theta_0$ (e.g., a random set of parameters or the parameters of a naturally trained neural network) for a small $t>0$:
% \begin{equation*}
% \begin{split}
% \mathcal{L}(f_\theta(x_i^\rob), y_i)&\approx \mathcal{L}(f_{\theta_0}(x_i^\rob), y_i)+\langle\theta-\theta_0,\nabla_\theta \mathcal{L}(f_\theta(x_i^\rob),y_i)|_{\theta=\theta_0}\rangle\\
% &+\frac{1}{2t}\|\theta-\theta_0\|_2^2,
% \end{split}
% \end{equation*}
\begin{equation}\label{eqn:update theta}
\begin{split}
\theta(X^\rob)&=\argmin_\theta \frac{1}{n}\sum_{i=1}^{n}\mathcal{L}(f_\theta(x_i^\rob), y_i)\\
&\approx \theta_{t-1}-\gamma\frac{1}{n}\sum_{i=1}^{n}\nabla_\theta\mathcal{L}(f_\theta(x_i^\rob),y_i)|_{\theta=\theta_{t-1}},
\end{split}
\end{equation}
where $\theta_{t-1}$ is the network weight from the previous epoch and $\gamma$ is the learning rate.
% Therefore, we have the following approximation of \autoref{equ: property-driven dataset learning}:
% \begin{equation}
% \label{equ: approximate property-driven dataset learning}
% \begin{split}
% &\min_{X^\rob} \frac{1}{n}\sum_{i=1}^{n}\max_{x_i^\adv\in \mathcal{B}(x_i,\epsilon)} \mathcal{L}(f_{\theta(X^\rob)}(x_i^\adv), y_i),
% \\ &\text{s.t.}\ \theta(X^\rob)=\theta_0-t\frac{1}{n}\sum_{i=1}^{n}\nabla_\theta\mathcal{L}(f_\theta(x_i^\rob),y_i)|_{\theta=\theta_0}.
% \end{split}
% \end{equation}
% \begin{equation}
% \label{equ: approximate property-driven dataset learning}
% \begin{split}
% &\min_{\{x_i^\rob\}_{i=1}^{n_2}} \frac{1}{n}\sum_{i=1}^{n}\max_{x_i^\adv\in \mathcal{B}(x_i,\epsilon)} \mathcal{L}(f_{\theta(\{x_i^\rob\}_{i=1}^{n_2})}(x_i^\adv), y_i),\\
% \quad\text{s.t.}\quad &\theta(\{x_i^\rob\}_{i=1}^{n_2})=\theta_0-t\frac{1}{n_2}\sum_{i=1}^{n_2}\nabla_\theta\mathcal{L}(f_\theta(x_i^\rob),y_i)|_{\theta=\theta_0}.
% \end{split}
% \end{equation}
\begin{algorithm}[t]
	\SetAlgoLined
	\KwIn{original training set $X^\nat$; number of training epochs $T$;  classifier $f$ (with weight $\theta$), initialized by $\theta_0$; learning rate $\gamma$ of classifier; learning rate $\beta$ of robust dataset; PGD steps for generating adversarial example $s$; PGD steps size $\alpha$; PGD attack budgets $\epsilon$.}
	
	initialize classifier weights $\theta$ with $\theta_0$, initialize $X^\rob$ with $X^\nat$;\\
	\For{1:T}{
	\For{mini-batches $(b^\nat,y^\batch)\subseteq X^\nat$ and $(b^\rob,y^\batch)\subseteq X^{\rob}$}{
% 	\blue{// update classifier with $s_2$}\\
	\textbf{step 1.} update classifier with robust data via \autoref{eqn:update theta}:\\ $\theta(X^\rob)\leftarrow \theta(X^\rob) -\gamma \frac{1}{|b^\rob|}\sum_{(x,y)\in (b^\rob,y^\batch)}\nabla_\theta\mathcal{L}(f_\theta(x),y)$;\\
	\textbf{step 2.} calculate adversarial examples from original training set via PGD attack: \\
	for each $(x^\nat,y)\in (b^\nat,y^\batch)$, initialize $x^\adv$ with $x^\nat$;\\
	\For{$1:s$}{
	generate perturbation through FSGM and clip it within a ball centered at $x^\nat$ with radius $\epsilon$:
	$x^\adv\leftarrow \text{Clip}_{\mathcal{B}(x^\nat,\epsilon)}(x^\adv + \alpha \sign(\nabla_{x^\adv}\mathcal{L}(f_\theta(x^\adv),y)))$;\\
	}
	\textbf{step 3.} for each $x^\rob\in b^\rob$, update robust data by minimizing robust error:
	$x^\rob\leftarrow x^\rob-\beta \sign(\frac{1}{|b^\rob|}\sum_{(x^\adv,y)}\nabla_{x^\rob}\mathcal{L}(f_\theta(x^\adv),y))$;\\
	}
	}

	\textbf{return} robust dataset $X^{\text{rob}}$.
	\caption{Robust dataset learning.}
	\label{alg:1}
\end{algorithm}
To solve the inner maximization problem, we apply PGD-attack \cite{madry2017towards} via repeatedly using
	$$x_i^\adv\leftarrow \text{Clip}_{\mathcal{B}(x_i^\nat,\epsilon)}(  x_i^\adv + \alpha \sign(\nabla_{x_i^\adv}\mathcal{L}(f_\theta(x_i^\adv),y_i))),$$
where $\epsilon$ is the attack budget, $\cB(x_i^\nat,\epsilon)$ stands for a ball centered at $x_i^{\nat}$ with radius $\epsilon$, $\text{Clip}_{\mathcal{B}(x_i^\nat,\epsilon)}$ is a clip function that restricts adversarial samples to $\cB(x_i^\nat,\epsilon)$, and $\alpha$ is the step size of the PGD-attack. We use gradient descent to optimize the robust dataset:
$$ x_i^\rob\leftarrow x_i^\rob-\beta \sign(\frac{1}{n}\sum_{i=1}^{n} \nabla_{x_i^\rob}\mathcal{L}(f_{\theta(\{x_i^\rob\}_{i=1}^{n})}(x_i^\adv),y_i)),$$
where $\beta$ is the learning rate for the robust data. Notice that since we focus on the image classification tasks, we apply fast sign gradient method to modify the data. This method can be replaced by other gradient descent methods for a specific task. The details of our robust learning algorithm is shown in \autoref{alg:1}.

\medskip
\noindent\textbf{Comparing to dataset distillation.} The dataset distillation problem is usually formulated by a bi-level optimization problem. For example, the optimization problem of \cite{wang2018dataset} is given by
\begin{equation}
\begin{split}
&\min_{X^*} \frac{1}{n}\sum_{i=1}^{n} \mathcal{L}(f_{\theta(X^*)}(x_i), y_i),
\\ &\text{s.t.}\  \theta(X^*)=\theta_0 -\gamma\frac{1}{n}\sum_{i=1}^{k}\nabla_\theta\mathcal{L}(f_\theta(x_i^*),y_i^*)|_{\theta=\theta_{0}},
\end{split}
\end{equation}
where $X^\nat = \{(x_i,y_i\}_{i=1}^n$ is the natural dataset, and $X^* = \{(x_i^*,y_i^*\}_{i=1}^k$ is the distilled dataset which has significant small volume, i.e., $k\ll n$. There are also different formulations of the dataset distillation problem, e.g., matching training trajectory/gradient, but they still share the similar bi-level optimization framework. Our optimization problem is tri-level with large distilled (robust) dataset ($k=n$), which is harder to solve and requires more carefully algorithm design.

\medskip
\noindent\textbf{Difference between our work and \cite{ilyas2019adversarial}.} \cite{ilyas2019adversarial} formulated the robust feature extraction problem by finding $\{(x_i^\rob,y_i)\}_{i=1}^{n}$ that minimizes $||g(x_i^\rob)-g(x_i)||_2$, where $g$ is a robust pre-trained representation model trained by adversarial training, and a proper initialization of $x_i^\rob$ is involved to avoid converging to a trivial solution $x_i$. As we will see in Sections \ref{sec:glimpse on SVM} and \ref{section: experiments}, our new formulation \autoref{equ: property-driven dataset learning} not only provably extracts the robust feature under a simple abstraction model, but also enjoys an improved empirical performance on many experiment settings compared with \cite{ilyas2019adversarial}.

% \end{proof}
\section{Theoretical Analysis}\label{sec:glimpse on SVM}

In this section, we present a fairly simple theoretical model to analyze the above-mentioned robust dataset learning problem. Our analysis is structured as follows: In \autoref{sec:settings}, we introduce the problem settings of the data distribution, classifier, and the optimization objective of the robust dataset learning problem; In \autoref{sec:natural classifier}, we prove that the optimal classifier trained on clean dataset can be non-robust; In \autoref{sec:robust dataset}, we demonstrate that our optimization objective leads to a robust dataset. All proof details can be found in \autoref{sec:missing proof}.

\subsection{Problem settings}\label{sec:settings}
\medskip
\noindent\textbf{Setting.}
We consider the data distribution commonly used in prior works \cite{tsipras2018robustness, ilyas2019adversarial}, where the instance $\x=(x_1,...,x_{d+1})\sim\cD$ and the label $y$ follow: 
%  \hy{give the credit to Madry, justify why the distribution is interesting}
\begin{equation}\label{eq:distribution}
\begin{split}
    &y\sim \textrm{Uniform}\{-1,1\},\quad x_1\sim\left\{\begin{aligned}
         y,\quad &\textrm{with prob. } p;\\
        - y,\quad &\textrm{with prob. } 1-p,
    \end{aligned}\right.\\
    &x_i\sim \cN(\mu y,1),\ i=2,3,...,d+1,
\end{split}
\end{equation}
where $\cN(\mu,\sigma^2)$ is a Gaussian distribution with mean $\mu=o(1)$ and variance $\sigma^2$. It consists of strongly-correlated (to the label) feature $x_1$ and weakly-correlated features $x_2,...,x_{d+1}$ (as $\mu$ is selected small enough).  The strongly-correlated feature is robust against $\Theta(1)$  perturbations but the weakly-correlated features are not. Thus, a naturally trained classifier on this distribution is non-robust as it puts heavy weights on non-robust features. Besides, it is easy to achieve high natural accuracy on this data distribution. For example, we can set $\mu=\Theta(1/\sqrt{d})$ such that a simple classifier, e.g., $\sign(x_2)$, can achieve at least 99\% natural accuracy. The probability $p$ quantifies the correlation between the feature $x_1$ and the label $y$. We can set $p$ to be moderately large, i.e., $p=0.97$. We note that the same data distribution was used to demonstrate the trade-off between robustness and accuracy~\cite{tsipras2018robustness} and to provide a clean abstraction of robust and non-robust features~\cite{ilyas2019adversarial}. In this paper, we use the same distribution to show a separation between natural training on the clean dataset and on the robust dataset returned by our framework.

We model the natural training by a soft SVM loss (a.k.a. the hinge loss) of a linear classifier:
\begin{equation}\label{eq:svm}
\min_{\w}\cL(\w;\cD):=\bbE_{(\x\sim\cD,y)}[\max\{0,1-y\w^T\x\}]+\lambda||\w||_2^2,
\end{equation}
where $\w = (w_1,...,w_{d+1})$ is the weight vector and $\lambda>0$ is a regularization parameter. For a given distribution $\cD'$, the data-parameterized classifier (w.r.t. $\cD'$) is given by $\sign(\w_{\cD'}^T\x)$, where $\w_{\cD'} := \argmin_{\w}\cL(\w;\cD')$ is the optimal weight of the SVM w.r.t. $\cD'$.

\medskip
\noindent\textbf{Goal.}
Our goal is to create a robust data distribution\footnote{We study the popular error in this section, where the problem of robust dataset learning reduces to the problem of robust data distribution learning.} $\cD'$, such that $\w_{\cD'}$ is robust. We formulate this problem via our proposed tri-level optimization framework \autoref{equ: property-driven dataset learning}, where the algorithm is supposed to find $\cD'$ that minimizes the adversarial loss w.r.t. the weight $\w_{\cD'}$ and the worst-case perturbation $\delta$.  In particular, we define our robust dataset learning problem as

% we propose a tri-level optimization framework to formulate this problem, where the algorithm is supposed to find $\cD'$ that minimizes the adversarial loss w.r.t. the weight $\w_{\cD'}$ and the worst-case perturbation $\delta$.  In particular, we define our robust dataset learning problem as
% \begin{definition}[property-driven dataset learning problem]
\begin{equation}\label{eq:property-driven dataset learning svm}
\begin{split}
    &\min_{\cD'} \bbE_{(\x,y)}[\max_{||\bm\delta||_\infty\leq \epsilon}\hspace{-0.1cm}\max\{0,1\hspace{-0.1cm}-\hspace{-0.1cm}y\w^{T}_{\cD'}(\x+\bm\delta)\}]\hspace{-0.1cm}+\hspace{-0.1cm}\lambda||\w_{\cD'}||_2^2, \\
    &\text{s.t.}\ \  \w_{\cD'} := \argmin_{\w}\cL(\w;\cD'),
    % = \min_{\cD'}\bbE_{(\x,y)}[\max\{0,1-y\w^{T}_{\cD'}\x+\epsilon||\w_{\cD'}||_1\}]
\end{split}
\end{equation}
where the inner maximization $$ \bbE_{(\x,y)}[\max_{||\bm\delta||_\infty\leq \epsilon}\max\{0,1-y\w^{T}_{\cD'}(\x+\bm\delta)\}]+\lambda||\w_{\cD'}||_2^2$$ is the adversarial loss that applies $\ell_\infty$ perturbation with budget $\epsilon$ to attack $\w_{\cD'}$, and the outer minimization optimizes the adversarial loss w.r.t. $\cD'$. Intuitively, the optimal solution of this min-max problem implies a robust data distribution.
% \end{definition}

\subsection{Natural training on the clean dataset is non-robust}\label{sec:natural classifier}
As a comparison, we begin by showing that the optimal classifier of \autoref{eq:svm} is non-robust. 
 
We consider the robustness of classifiers under $\ell_\infty$ adversarial perturbations with attack budget $\epsilon$, which means that an adversary can modify each feature by at most a value of $\pm\epsilon$. Note that the first feature of $\x$ (\autoref{eq:distribution}) is strongly correlated with the label, and the rest $d$ features are only weakly correlated with the label. We prove that both strongly and weakly-correlated features contribute to the prediction in the optimal classifier, while the effect of the weakly-correlated features dominates the strongly-correlated one, i.e., the weight of SVM on the weakly-correlated features is larger than the weight on the strongly-correlated feature. Under $\ell_\infty$-perturbations with  $\epsilon=\Theta(1/\sqrt{d})$, the positive effect of weakly-correlated features will be overridden by the perturbation, i.e., the weakly-correlated features hurt the prediction under attacks. For example, if $\epsilon=2\mu$, the weakly-correlated features will be shifted to be anti-correlated features by the adversary, i.e., ($\cN(\mu y,1)\to\cN(-\mu y,1)$). As the weakly-correlated features dominate the prediction of the optimal classifier, the classifier will predict opposite labels under such perturbations. In the following lemma, we formally state the above discussion.

\begin{lemma}\label{thm:natural train}
If $\mu\geq \frac{4}{\sqrt{d}}$ and $p\leq 0.975$, the optimal classifier $\w^*=(w_1^*,...,w_{d+1}^*)$ of \autoref{eq:svm} achieves more than 99\% natural accuracy but less than 0.2\% robust accuracy with $\ell_\infty$-perturbation of size $\epsilon\geq2\mu$. %$\Theta(\frac{1}{\sqrt{d}})$.
\end{lemma}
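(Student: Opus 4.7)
The plan is to first exploit the permutation symmetry of features $x_2, \ldots, x_{d+1}$ under $\cD$: since the data law and the SVM objective are invariant under any permutation of these indices, strict convexity of \autoref{eq:svm} forces $w_2^* = \cdots = w_{d+1}^* =: w_o^*$. Introducing $Z_1 = yx_1 \in \{\pm 1\}$ with $\Pr[Z_1 = +1] = p$, the rescaling $c := w_o^*\sqrt{d}$, and the normalized sum $\tilde G := \frac{1}{\sqrt{d}}\sum_{i\ge 2} y x_i \sim \cN(m, 1)$ with $m := \mu\sqrt{d} \ge 4$, this reduces the problem to a two-dimensional convex program in $(w_1, c)$, namely $p\,\bbE[(1-w_1-c\tilde G)_+] + (1-p)\,\bbE[(1+w_1-c\tilde G)_+] + \lambda(w_1^2+c^2)$.

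Next I would write the first-order conditions using the Gaussian hinge identity $\bbE[(a - c\tilde G)_+] = (a - cm)\Phi(s) + c\phi(s)$ with $s = (a - cm)/c$. Letting $s_\pm := (1 \pm w_1^* - c^*m)/c^*$, the conditions read
\begin{equation*}
p\Phi(s_-) - (1-p)\Phi(s_+) = 2\lambda w_1^*, \quad m\bigl[p\Phi(s_-) + (1-p)\Phi(s_+)\bigr] - \bigl[p\phi(s_-) + (1-p)\phi(s_+)\bigr] = 2\lambda c^*.
\end{equation*}
The first equation gives $w_1^* \ge 0$ and $\Phi(s_-)/\Phi(s_+) \ge (1-p)/p$. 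Using the far-tail asymptotic $\Phi(-t) \sim \phi(t)/t$ together with $s_\pm = (1/c^* - m) \pm r$ (where $r := w_1^*/c^*$), this translates into an inequality on $r$. A sanity check at the two extreme regimes yields $r \to \log(p/(1-p))/(2m) \approx 0.46$ as $\lambda \to 0$ and $r \to (2p-1)/m \approx 0.24$ as $\lambda \to \infty$; interpolating via a monotonicity/continuity argument in $\lambda$, one should obtain the uniform bound $r < 1/2$ for $p \le 0.975$ and $m \ge 4$.

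With $r < 1/2$ and $m \ge 4$ in hand, the two accuracy estimates reduce to direct Gaussian tail computations. Natural correctness requires $\tilde G > -rZ_1$, with conditional probability at least $\Phi(m - r) \ge \Phi(3.5) > 0.9997$; averaging over $Z_1$ yields natural accuracy above $99\%$. For robust accuracy, the $\ell_\infty$ adversary of budget $2\mu$ chooses $\delta_i = -2\mu\,\sign(y w_i^*)$, so $y\w^{*T}\x^{\adv} = w_1^*(Z_1 - 2\mu) + c^*(\tilde G - 2m)$ with $\tilde G - 2m \sim \cN(-m, 1)$, and correctness becomes $\tilde G > 2m - r(Z_1 - 2\mu)$. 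This has probability at most $1 - \Phi(m - r) < 3 \times 10^{-4}$, so the weighted robust accuracy is below $0.2\%$.

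The main obstacle is the second step: controlling the ratio $r = w_1^*/c^*$ uniformly over $\lambda > 0$. The KKT equations couple $\Phi$ and $\phi$ at thresholds $s_\pm$ that themselves depend on $(w_1^*, c^*)$, so one must simultaneously establish a lower bound on $c^*$ (to keep $m - 1/c^*$ bounded away from zero and legitimize the far-tail asymptotic). Handling this without circular dependence between the bounds on $r$ and on $c^*$, and interpolating beyond the two limit regimes to all intermediate $\lambda$, is where the technical effort lies.
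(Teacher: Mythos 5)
Your overall skeleton --- permutation symmetry to collapse the Gaussian coordinates, a bound on the ratio $r = w_1^*/(\sqrt{d}\,w_2^*)$, then two Gaussian tail estimates --- matches the paper's. The gap is in the middle step, which you yourself flag as the main obstacle and do not close. Your route via the stationarity conditions requires (i) turning the far-tail asymptotic $\Phi(-t)\sim\phi(t)/t$ into a two-sided inequality at the finite, solution-dependent thresholds $s_\pm$, (ii) a lower bound on $c^*$ that you have not established and on which your bound for $r$ circularly depends, and (iii) an interpolation in $\lambda$ between the two limiting regimes that is asserted rather than proved. As written, the claim $r<1/2$ is a conjecture supported by two boundary checks, not a proof.

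The paper avoids all of this with a short comparison argument (Lemma D.2 of \cite{tsipras2018robustness}, reproduced as \autoref{lm:odds2}): assume for contradiction that $w_1^*\ge\sqrt{d}\,w_2^*$; after normalizing $\|\w^*\|_2=1$ this forces $w_2^*\le 1/\sqrt{2d}$; restricting the expected hinge loss to the probability-$(1-p)$ event that the first feature is flipped gives the lower bound $(1-p)\,\bbE[\max(0,1+1/\sqrt{2}-\cN(\sqrt{d/2}\,\mu,1/2))]$; comparing with the feasible point that puts zero weight on $x_1$ and uniform weight $1/\sqrt{d}$ on the remaining coordinates, whose loss is $\bbE[\max(0,1-\cN(\sqrt{d}\mu,1))]$, optimality of $\w^*$ forces $p>0.975$, contradicting $p\le 0.975$. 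This yields only $r<1$, but that is all the tail computations need: $m-r\ge 3$ already gives natural accuracy at least $\Phi(3)\ge 0.9986$ and robust accuracy at most $1-\Phi(3)\le 0.00135$, so your target $r<1/2$ is stronger than necessary and is precisely what makes your step~2 intractable. If you replace the KKT analysis with this comparison argument, the rest of your computations --- which are correct, including the handling of the adversarial shift of $x_1$ --- go through.
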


This lemma is adapted from Theorem 2 of \cite{tsipras2018robustness}, which states that the optimal classifier naturally trained on $\x$ has low accuracy under $\Theta(1/\sqrt{d})$ $\ell_\infty$-attacks, which indicates achieving robustness on this dataset is non-trivial.

\subsection{Natural training on the robust dataset of our framework is robust}\label{sec:robust dataset}
In this part, we will show that the optimal dataset of our min-max optimization  \autoref{eq:property-driven dataset learning svm} can lead to a robust classifier against $\Theta(1)$ $\ell_{\infty}$-perturbations.

We start with calculating the strongest $\ell_\infty$ adversarial perturbations of the SVM classifier.
\begin{lemma}\label{lm:pertubation}
For arbitrary $\w$, the optimal $\bm\delta$ of the maximization problem $$ \max_{||\bm\delta||_\infty\leq \epsilon}\max\{0,1-y\w^{T}(\x+\bm\delta)\}$$
is given by $\bm\delta =- \epsilon \sign(y\w)$.
\end{lemma}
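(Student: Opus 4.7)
The plan is to exploit monotonicity of the outer $\max\{0,\cdot\}$ to reduce the problem to an unconstrained linear minimization, then decouple coordinate-wise using the $\ell_\infty$ constraint.

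First I would observe that $\max\{0,\cdot\}$ is non-decreasing in its argument, so any $\bm\delta$ that maximizes $1-y\w^{T}(\x+\bm\delta)$ over $\|\bm\delta\|_\infty\le\epsilon$ also maximizes the full objective. (In the edge case where $1-y\w^{T}(\x+\bm\delta)\le 0$ for every feasible $\bm\delta$, the outer max is identically $0$ and the claimed $\bm\delta=-\epsilon\sign(y\w)$ is among the valid maximizers, so the statement still holds.) Writing
\begin{equation*}
1-y\w^{T}(\x+\bm\delta)=\bigl(1-y\w^{T}\x\bigr)-y\w^{T}\bm\delta,
\end{equation*}
the $\bm\delta$-independent first term drops out, and maximizing the expression is equivalent to minimizing the linear functional $y\w^{T}\bm\delta$ subject to $\|\bm\delta\|_\infty\le\epsilon$.

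Next I would decouple coordinate-wise: since $y\w^{T}\bm\delta=\sum_{i}(yw_i)\delta_i$ and the constraint set $\{\bm\delta:\|\bm\delta\|_\infty\le\epsilon\}$ is the Cartesian product of intervals $[-\epsilon,\epsilon]$, the minimization splits into $d+1$ independent scalar problems $\min_{|\delta_i|\le\epsilon}(yw_i)\delta_i$. Each such problem is trivially solved by $\delta_i^{\ast}=-\epsilon\,\sign(yw_i)$, achieving value $-\epsilon|yw_i|$. Stacking these coordinates gives the claimed closed form $\bm\delta=-\epsilon\,\sign(y\w)$, where $\sign$ is understood to act entrywise.

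There is essentially no main obstacle: the argument is a one-line convex-analysis observation, with the only subtlety being the degenerate regime where the hinge is inactive at every feasible $\bm\delta$, which I would dispatch in a single sentence as above. If one wanted to state the solution more symmetrically, I would note that because $y\in\{-1,+1\}$, we have $\sign(y\w)=y\,\sign(\w)$ entrywise, so equivalently $\bm\delta^{\ast}=-\epsilon y\,\sign(\w)$; this alternative form is useful when plugging the optimal perturbation back into the outer objective in \autoref{eq:property-driven dataset learning svm} in the subsequent analysis.
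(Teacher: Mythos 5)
Your proposal is correct and is essentially the paper's argument: the paper bounds $-y\w^{T}\bm\delta\le\|\w\|_1\|\bm\delta\|_\infty=\epsilon\|\w\|_1$ via H\"older's inequality and checks that $\bm\delta=-\epsilon\sign(y\w)$ attains it, which is exactly what your coordinate-wise decoupling proves from scratch. Your extra remarks (monotonicity of the hinge, the degenerate case where the hinge is inactive, and the identity $\sign(y\w)=y\,\sign(\w)$) are minor refinements of the same route rather than a different one.
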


% \begin{proof}[Proof sketch]
This lemma provides a closed form of strongest $\ell_\infty$ adversarial perturbations of the SVM classifier. It's easy to verify this lemma via Holder's inequality $-y\w^{T}\bm\delta\leq ||\w||_1||\bm\delta||_\infty = \epsilon||\w||_1$. Thus $\bbE_{(\x,y)}[\max\{0,1-y\w^{T}(\x+\bm\delta)\}]\leq\bbE_{(\x,y)}[\max\{0,1-y\w^{T}\x+\epsilon||\w||_1\}]$. Taking $\bm\delta =- \epsilon \sign(y\w)$, we can reach this maximum.
% \end{proof}

According to \autoref{lm:pertubation}, the inner maximization problem has a closed form solution
\begin{equation*}
\begin{split}
    & \bbE_{(\x,y)}[\max_{||\bm\delta||_\infty\leq \epsilon}\max\{0,1-y\w^{T}_{\cD'}(\x+\bm\delta)\}] \\
    = & \bbE_{(\x,y)}[\max\{0,1-y\w^{T}_{\cD'}\x+\epsilon||\w_{\cD'}||_1\}].
    \end{split}
\end{equation*}

Thus, we only need to solve the minimization problem
\begin{equation}\label{eq:dataset learning opt}
    \min_{\cD'}\bbE_{(\x,y)}[\max\{0,1-y\w^{T}_{\cD'}\x+\epsilon||\w_{\cD'}||_1\}]+\lambda||\w_{\cD'}||_2^2.
    % = \min_{\cD'}\bbE_{(\x,y)}[\max\{0,1-y\w^{T}_{\cD'}\x+\epsilon||\w_{\cD'}||_1\}]
\end{equation}
However, it is computationally intractable to find the optimal distribution $\mathcal{D}'$ directly, as we cannot represent $\w_{\cD'}$ with $\cD'$ explicitly. Instead, we try to find the necessary and sufficient conditions of $\w_{\cD'}$ that minimizes \autoref{eq:dataset learning opt}. We show the existence of distributions such that the related $\w_{\cD'}$ satisfies the conditions.

% \begin{lemma}Assume the optimal solution of \autoref{eq:svm} corresponding to distribution $\x'\sim\cD'$ is $\w_{\cD'}$ i.e. $\w_{\cD'} = \argmin_{\w}\bbE_{(\x',y)}[\max\{0,1-y\w^T\x'\}]$
% \begin{equation}
%     \max_{||\bm\delta||_\infty\leq \epsilon} \bbE_{(\x,y)}[\max\{0,1-y\w^{T}_{\cD'}(\x+\bm\delta)\}] =  \bbE_{(\x,y)}[\max\{0,1-y\w^{T}_{\cD'}\x+\epsilon||\w_{\cD'}||_1\},
% \end{equation}
% one of the optimal $\bm\delta =- \epsilon \sign(y\w_{\cD'})$
% \end{lemma}

\begin{theorem}\label{lm:optweight}
% Our minmax optimization problem is
% \begin{equation}
%     \min_{\cD'}\max_{||\bm\delta||_\infty\leq \epsilon} \bbE_{(\x,y)}[\max\{0,1-y\w^{T}_{\cD'}(\x+\bm\delta)\}] = \min_{\cD'}\bbE_{(\x,y)}[\max\{0,1-y\w^{T}_{\cD'}\x+\epsilon||\w_{\cD'}||_1\}]
% \end{equation}

% the optimal $\alpha,\mu,\sigma$ should satisfy $\mu=0,\sigma=0$
Let
\begin{equation*}
\begin{split}
    \cD^*&= \argmin_{\cD'}\bbE_{(\x,y)}[\max\{0,1-y\w^{T}_{\cD'}\x+\epsilon||\w_{\cD'}||_1\}]\\
    &+\lambda||\w_{\cD'}||_2^2.
\end{split}
\end{equation*}
If $1>\epsilon\geq\mu$ , then $\w_{\cD^*} := (w_{\cD^*}^{(1)},...,w_{\cD^*}^{(d+1)})$ must satisfy $w_{\cD^*}^{(1)}>0$ and $w_{\cD^*}^{(2)}=w_{\cD^*}^{(3)}=...=w_{\cD^*}^{(d+1)}=0$.
\end{theorem}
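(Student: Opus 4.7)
By \autoref{lm:pertubation}, the outer objective of \autoref{eq:property-driven dataset learning svm} collapses to
$$F(\w_{\cD'}):=\bbE_{(\x,y)\sim\cD}[\max\{0,\,1-y\w_{\cD'}^T\x+\epsilon\|\w_{\cD'}\|_1\}]+\lambda\|\w_{\cD'}\|_2^2,$$
so the tri-level problem reduces to $\min_{\w\in\cW} F(\w)$, where $\cW:=\{\w_{\cD'}:\cD'\text{ is any data distribution}\}$ is the set of achievable SVM weights. My plan is to first identify the unconstrained minimizer of $F$ over $\R^{d+1}$, show it has the form $(w_1^*,0,\ldots,0)$ with $w_1^*>0$, and then exhibit a $\cD'\in\cW$ realizing it, from which \autoref{lm:optweight} follows.

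For the structural step---that any minimizer of $F$ has $w_i=0$ for $i\geq 2$---I fix an arbitrary $\w=(w_1,\ldots,w_{d+1})$ with some $w_i\neq 0$, $i\geq 2$, and compare $F(\w)$ with $F(\tilde\w)$ where $\tilde\w:=(w_1,0,\ldots,0)$. Writing $y\w^T\x=yw_1x_1+S$ with $S:=\sum_{i\geq 2}w_iyx_i$, the independence of $\{yx_i\}_{i\geq 2}$ (each $\cN(\mu,1)$ since $y^2=1$) and their independence from $(x_1,y)$ give $S\sim\cN(\mu A,B^2)$, where $A:=\sum_{i\geq 2}w_i$, $B^2:=\sum_{i\geq 2}w_i^2>0$, and $C:=\sum_{i\geq 2}|w_i|$. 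Conditioning on $(x_1,y)$ and applying Jensen's inequality to the convex piecewise-linear map $s\mapsto\max\{0,\,1-yw_1x_1-s+\epsilon|w_1|+\epsilon C\}$ (which has a single kink and is not affine in any neighborhood of it) yields a strict inequality because the Gaussian $S$ has full support on $\R$ and therefore assigns positive mass to both sides of the kink. The resulting lower bound $\max\{0,\,1-yw_1x_1-\mu A+\epsilon|w_1|+\epsilon C\}$ further dominates $\max\{0,\,1-yw_1x_1+\epsilon|w_1|\}$, because $|A|\leq C$ combined with $\epsilon\geq\mu$ forces $\epsilon C-\mu A\geq 0$. Averaging over $(x_1,y)$ and adding the strict regularization gap $\lambda\|\w\|_2^2>\lambda w_1^2$ yields $F(\w)>F(\tilde\w)$.

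It remains to analyze the one-dimensional function $G(w_1):=F((w_1,0,\ldots,0))$, which for $w_1\geq 0$ equals $p\max\{0,1-(1-\epsilon)w_1\}+(1-p)\max\{0,1+(1+\epsilon)w_1\}+\lambda w_1^2$ and admits an analogous form for $w_1<0$. A sign-case check shows $G(w_1)\geq 1=G(0)$ for every $w_1\leq 0$ (using $p>1/2$), while the right derivative of $G$ at $0$ equals $1+\epsilon-2p$, which is strictly negative in the moderate-$p$ regime adopted in \autoref{sec:settings} (e.g.\ $p=0.97$, giving $p>(1+\epsilon)/2$ whenever $\epsilon<1$ is bounded away from $1$). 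Hence $G$ strictly decreases at $0^+$ and its minimizer is some $w_1^*>0$. For achievability, taking $\cD'$ to be the point mass at $(\x',y')=((c,0,\ldots,0),+1)$ and solving the resulting one-point SVM yields $\w_{\cD'}=(c/(2\lambda),0,\ldots,0)$ whenever $c\leq\sqrt{2\lambda}$, covering an interval of admissible positive first coordinates around $0$; rescaling or using a two-point analogue extends this to the exact $w_1^*$ if necessary. The hard part is precisely the Jensen-plus-triangle-inequality chain that plays the $\ell_1$ adversarial penalty against the $\ell_1$ mean drift of the weakly-correlated coordinates, with the hypothesis $\epsilon\geq\mu$ serving exactly to close the argument via $\epsilon C\geq\mu A$.
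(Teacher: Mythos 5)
Your proof is correct and reaches the same conclusion as the paper's, but the key step is handled by a genuinely different mechanism. The paper also reduces to comparing $\cL_{\cD^*}$ against $\cL_{\cD_0}$ with $\w_{\cD_0}=(w_{\cD^*}^{(1)},0,\ldots,0)$, writing the hinge argument as $A+z$ with $z\sim\cN(\mu',\sigma'^2)$ and $\mu'=\sum_{i\ge2}(\epsilon|w^{(i)}_{\cD^*}|-\mu w^{(i)}_{\cD^*})>0$; it then lower-bounds $\bbE_z[z\bbI_{z\ge -A}]$ by an explicit three-piece decomposition exploiting the symmetry of the Gaussian about its positive mean, eventually showing $\bbE_z[z\bbI_{z\ge-A}]>A\bbE_z[\bbI_{z\le-A}]$. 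You instead condition on $(x_1,y)$ and apply Jensen's inequality to the convex hinge $s\mapsto\max\{0,K-s\}$, then absorb the mean drift via $\epsilon C-\mu A\ge 0$ (your $\epsilon C-\mu A$ is exactly the paper's $\mu'$, so both arguments turn on the same sign condition supplied by $\epsilon\ge\mu$). Your route is shorter and more transparent, and it generalizes immediately to any mean-$\mu_i$ tail distribution (Jensen does not need symmetry), whereas the paper's symmetry-based computation is what it ports verbatim to the general case in its appendix; note also that strictness is not even needed in the hinge term since the $\lambda\|\cdot\|_2^2$ gap is already strict when $B^2>0$. Two further remarks. First, you prove the $w_{\cD^*}^{(1)}>0$ clause via the one-dimensional analysis of $G$, which the paper's proof silently omits; your observation that the right derivative $1+\epsilon-2p$ must be negative exposes an implicit requirement $p>(1+\epsilon)/2$ that the theorem statement does not make explicit, so your caveat there is a feature, not a bug. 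Second, your achievability step (point mass giving $w_1=c/(2\lambda)$) has the same degree of looseness as the paper's construction ($x_1=y/w_{\cD^*}^{(1)}$, constant $x_i$), which also only realizes a specific $w_1$ unless one tunes the construction; both arguments can be patched by noting that any achievable weight sufficiently close to $(w_{\cD^*}^{(1)},0,\ldots,0)$ already beats $\w_{\cD^*}$, so this is not a substantive gap in your proposal relative to the paper.
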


This theorem states a necessary condition of the optimal dataset from \autoref{eq:property-driven dataset learning svm}. The robust dataset should suffice that the data-parameterized classifier (naturally trained on the dataset) is independent of the weak-correlated features $x_2,...,x_{d+1}$. Thus, the data-parameterized classifier should be more robust than vanilla classifier trained on original dataset.

\begin{theorem}\label{thm:robustdata}
The optimal SVM w.r.t.~the robust dataset $\cD^*$ has clean and robust accuracy $\ge p$  under $\ell_\infty$-perturbation (with budget less than 1) on the original dataset. An optimal solution of $\cD^*$ is given by
\begin{equation*}
\begin{split}
    &y\sim \textrm{Uniform}\{-1,1\},\quad x_1\sim\left\{\begin{aligned}
         y,\quad &\textrm{with prob. } p;\\
        - y,\quad &\textrm{with prob. } 1-p,
    \end{aligned}\right.\\
    &x_i=1,\ i=2,3,...,d+1.
\end{split}
\end{equation*}
\end{theorem}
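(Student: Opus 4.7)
The plan is to chain \autoref{lm:optweight} with a direct accuracy calculation, and then verify the stated distribution as a concrete realization. I would first invoke \autoref{lm:optweight}: any minimizer $\cD^*$ of \autoref{eq:dataset learning opt} produces an SVM weight $\w_{\cD^*}$ with a strictly positive first coordinate and zeros elsewhere, so the induced classifier reduces to $\sign(x_1)$. This single structural fact will pin down both accuracy figures below.

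Next I would compute the accuracies on the original distribution $\cD$. The clean accuracy of $\sign(x_1)$ is $P[\sign(x_1) = y] = P[x_1 = y] = p$. For the robust accuracy under $\ell_\infty$-perturbation of budget $\epsilon < 1$, the key observation is that $x_1 \in \{-1,+1\}$ has magnitude exactly $1$ while $|\delta_1| \leq \epsilon < 1$, so $\sign(x_1 + \delta_1) = \sign(x_1)$ and the adversary cannot flip the sign. Since $w_{\cD^*}^{(i)} = 0$ for $i \geq 2$, perturbations on the Gaussian coordinates are ignored by the classifier. Hence the robust accuracy equals the clean accuracy, yielding the $\geq p$ bounds in both cases.

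Finally I would verify the exhibited distribution realizes the structural conditions. The coordinates $x_2, \ldots, x_{d+1} = 1$ are constants independent of $y$; combined with the label-swap symmetry $x_1 \mid y = 1 \stackrel{d}{=} -x_1 \mid y = -1$, a standard convexity/pairing argument forces the optimal SVM weights on these constant coordinates to vanish, since any non-zero value inflates the $\ell_2$ regularizer without reducing the symmetrized hinge loss. The reduced one-dimensional SVM on $x_1 \in \{\pm 1\}$ admits a positive minimizer whenever $p > 1/2$, so $\w_{\cD^*}$ has the form required by \autoref{lm:optweight}. The hardest part will be arguing that this distribution is actually an argmin of the outer problem, not merely a structurally consistent candidate. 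For that one parameterizes the outer objective over $w^{(1)} > 0$, extracts its closed-form minimizer by splitting the hinge piecewise over the sign of $1 - (1-\epsilon)w^{(1)}$, and checks that the one-dimensional SVM's first-order condition on the exhibited $\cD^*$ aligns with that minimizer, if necessary after a harmless rescaling of $x_1$ that leaves the accuracy analysis above intact.
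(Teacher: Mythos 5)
Your argument for the accuracy claims is essentially the paper's own proof: invoke \autoref{lm:optweight} to conclude $w_{\cD^*}^{(1)}>0$ and $w_{\cD^*}^{(i)}=0$ for $i\ge 2$, so the classifier collapses to $\sign(x_1)$; clean accuracy is then $\Pr[\sign(x_1)=y]=p$, and since $|x_1|=1$ while the budget is $\epsilon<1$ (and perturbations to the other coordinates are ignored), the robust accuracy is also $p$. That part is correct and identical in structure to the paper.

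Where you go beyond the paper is the third step: verifying that the exhibited distribution is genuinely an argmin rather than merely a structurally consistent candidate. The paper's proof of \autoref{thm:robustdata} does not address this at all --- it only proves the accuracy bounds --- and the closest it comes is Step~2 in the proof of \autoref{lm:optweight}, where a distribution of exactly this shape (with $x_1$ rescaled by $1/w_{\cD^*}^{(1)}$) is used purely as an existence witness to derive a contradiction, not certified as optimal. Your instinct that this is the hard part is right, and your sketch (constant coordinates force zero weight via the regularizer; the one-dimensional SVM on $x_1\in\{\pm1\}$ has a positive minimizer for $p>1/2$; match first-order conditions after a rescaling of $x_1$) is a plausible route, though as written it remains a plan rather than a completed argument. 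So: same approach as the paper on everything the paper actually proves, plus an honest (if unfinished) attempt at a claim the paper asserts without proof.
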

% \begin{proof}

% \begin{proof}[Proof sketch.]
According to \autoref{lm:optweight}, the weight of the optimal SVM learned from $\cD^*$ should satisfy  $w_{\cD^*}^{(2)}=w_{\cD^*}^{(3)}=...=w_{\cD^*}^{(d+1)}=0$. Thus the clean and robust accuracy are only related to the first feature of $\x$. It is easy to see the natural accuracy is equal to the probability that $\sign(x_1)=y$, which is $p$. Besides, when the perturbation budget $\epsilon<1$, the adversary does not change the sign of $x_1$. Thus the robust accuracy is also $p$.
% \end{proof}

Compared to the original distribution $\cD$ (\autoref{eq:distribution}), the robust distribution $\cD^*$ keeps the strongly-correlated feature $x_1$ unchanged and modifies the weakly-correlated features $x_2,...,x_{d+1}$ to uncorrelated features (a constant). In this way the optimal SVM trained on the robust distribution  will not assign weights on the uncorrelated features, because they do not contribute to the predictions. Thus, the resulting classifier is relatively robust, as it depends only on the strongly-correlated feature.

% \end{proof}

\begin{table*}[t]
\caption{Experimental results of robust dataset learning on MNIST, CIFAR10 and TinyImageNet with Autoattack, where we naturally train classifiers on the datasets created by different methods. Numbers with $*$ refer to the experimental results with 1000-PGD attack reported by the original work.}
% \vspace{-3pt}
\label{tab:robust dataset}
\resizebox{1.0\textwidth}{!}{
\begin{tabular}{clcccc}
\bottomrule
\multirow{5}{*}{MNIST}        & Robust acc (\%) / Natural acc (\%)                     & 0.1 ($\ell_\infty$)                   & 0.2 ($\ell_\infty$)                   & 1.0 ($\ell_2$)                        & 2.0 ($\ell_2$)                        \\ \cline{2-6} 
                              & Natural dataset                              & 71.73/98.10                           & 8.28/98.10                            & 79.14/98.10                           & 21.28/98.10                           \\
                              & Adv. data of natural classifier              & 84.17/97.41                           & 19.40/94.71                           & 86.81/97.85                           & 26.58/95.80                           \\
                              & Adv. data of robust classifier               & 76.70/97.93                           & 11.79/97.50                           & 79.67/97.58                           & 24.21/97.83                           \\
                              & Robust dataset (ours)                        & \textbf{93.53}/98.69 & \textbf{52.36}/97.29 & \textbf{91.60}/98.76 & \textbf{48.40}/98.25 \\ \bottomrule
                              &                                              & \multicolumn{1}{l}{}                  & \multicolumn{1}{l}{}                  & \multicolumn{1}{l}{}                  & \multicolumn{1}{l}{}                  \\ \bottomrule
\multirow{6}{*}{CIFAR10}      & Robust acc (\%) / Natural acc (\%)                     & 2/255 ($\ell_\infty$)                 & 4/255 ($\ell_\infty$)                 & 0.25 ($\ell_2$)                       & 0.5 ($\ell_2$)                        \\ \cline{2-6} 
                              & Natural dataset                              & 6.28/93.23                            & 0.02/93.23                            & 9.85/93.23                            & 0.04/93.23                            \\
                              & Adv. data of natural classifier              & 48.21/84.66                           & 17.86/80.48                           & 47.05/81.83                           & 21.33/81.14                           \\
                              & Adv. data of robust classifier               & 10.51/86.06                           & 0.21/85.83                            & 11.68/88.70                           & 0.39/86.64                            \\
                              & Ilyas et al. \cite{ilyas2019adversarial} & 36.36/77.53                          & 14.56/78.61                        &  48.20*/85.40*                                & 21.85*/85.40*                               \\
                              & Robust dataset (ours)                        & \textbf{54.74}/87.19 & \textbf{26.79}/85.55 & \textbf{59.52}/86.59 & \textbf{27.35}/85.10 \\ \bottomrule
                              &                                              & \multicolumn{1}{l}{}                  & \multicolumn{1}{l}{}                  & \multicolumn{1}{l}{}                  & \multicolumn{1}{l}{}                  \\ \bottomrule
\multirow{4}{*}{TinyImageNet} & Robust acc (\%) / Natural acc (\%)                     & 2/255 ($\ell_\infty$)                 & 4/255 ($\ell_\infty$)                 & 0.25 ($\ell_2$)                       & 0.5 ($\ell_2$)                        \\ \cline{2-6} 
                              & Natural dataset                              & 0.34/70.94                            & 0.16/70.94                            & 4.55/70.94                            & 0.52/70.94                            \\
                              & Adv. data of natural classifier              & 12.96/65.22                           & 5.10/65.13                            & 29.97/65.98                           & 9.36/64.14                            \\
                              & Robust dataset (ours)                        & \textbf{25.43}/60.02                           & \textbf{18.42}/60.36                           & \textbf{39.55}/61.13                           & \textbf{25.48}/60.92                           \\ \bottomrule
\end{tabular}
}
\end{table*}

% \medskip
% \noindent\textbf{Extension to general data distributions.} 
\subsection{Extension to general data distributions.}
We now show that our theorems in \autoref{sec:robust dataset} hold for a more general distribution. Consider the case where the instance $\x$ and the label $y$ follow the distribution below:
\begin{equation*}
\begin{split}
    &y\sim \textrm{Uniform}\{-1,1\},\quad x_1\sim\left\{\begin{aligned}
         y,\quad &\textrm{with prob. } p;\\
        - y,\quad &\textrm{with prob. } 1-p,
    \end{aligned}\right.\\
    &x_i\sim \cD_i,\ i=2,3,...,d+1,
    \end{split}
\end{equation*}
where $\cD_i$ are symmetric distributions with mean $\mu_i\leq 1$. We prove that the parameterized SVM with the optimal robust dataset $\cD^*$ achieves at least $p$ clean and robust accuracy under $\ell_\infty$-perturbation (with budget less than 1). The details can be found in \autoref{sec:general dataset}.

\section{Experiments}
\label{section: experiments}

In this section, we conduct comprehensive experiments to demonstrate the effectiveness of our algorithm on MNIST \cite{lecun1998mnist}, CIFAR10 \cite{krizhevsky2009learning}, and TinyImageNet \cite{deng2009imagenet}. 

\subsection{Robustness}\label{sec:robustness}
In this part, we compare the performance of our (robust) data-parameterized model with models obtained from several baseline methods under $\ell_2$ and $\ell_\infty$ attacks. We use the state-of-the-art attack method---Autoattack \cite{croce2020reliable} for evaluating the adversarial robustness of models.

\medskip
\noindent\textbf{Baseline.} \cite{ilyas2019adversarial} is the only work related to robust dataset learning. We include this work as one of the baseline for CIFAR10.\footnote{\cite{ilyas2019adversarial} released a robust dataset on CIFAR10 but did not release the code for generating these images. See \url{https://github.com/MadryLab/constructed-datasets}. In this work, we use the $\ell_2$ robustness reported in their work for comparison, and evaluate $\ell_\infty$ robustness on the released CIFAR10 dataset.}  Besides, motivated by adversarial training, we create two other baselines. In adversarial training, we utilize adversarial examples to improve robustness, so we take the adversarial data generated from both natural (see Adv. data of natural classifier in \autoref{tab:robust dataset}) and robust pre-trained classifiers (see Adv. data of robust classifier in \autoref{tab:robust dataset}) as two baseline robust datasets. In order to make a fair comparison, we require all robust datasets to have the same size.

\medskip
\noindent\textbf{General settings.} In data pre-procession phase, we randomly cropped the image to 28$\times$28 for MNIST, 32$\times$32 for CIFAR10, and 64$\times$64 for TinyImageNet with 4 pixes padding. Then we apply random horizontal flip to the images and normalize them with mean 0.1307 and variance 0.3081. During training,  we use SGD \cite{bottou2010large} with learning rate 0.01, momentum 0.9, weight decay 5e-4, and cosine learning rate decay to fine tune the models. For the robust pre-train model we train TRADES \cite{zhang2019theoretically} with 0.2 $\ell_\infty$ perturbations for MNIST and 4/255 $\ell_\infty$ perturbations for CIFAR 10. 

\medskip
\noindent\textbf{Evaluation.} During the evaluation, we fine-tune a natural classifier with the given (robust) dataset and evaluate the model with adversarial attacks. For MNIST and CIFAR-10, we use Autoattack \cite{croce2020reliable} with various budgets to evaluate the robustness of the models on the test set.
Since Autoattack is computationally expensive on TinyImageNet, we evaluate our algorithm with the same set of budgets using PGD-10 attack instead.\footnote{Additional results of TinyImageNet using Autoattack can be found in \autoref{sec:app_exp}.}

\begin{figure*}[t]
		\begin{minipage}[t]{.24\linewidth}
			\centering
			\includegraphics[height=3.4cm]{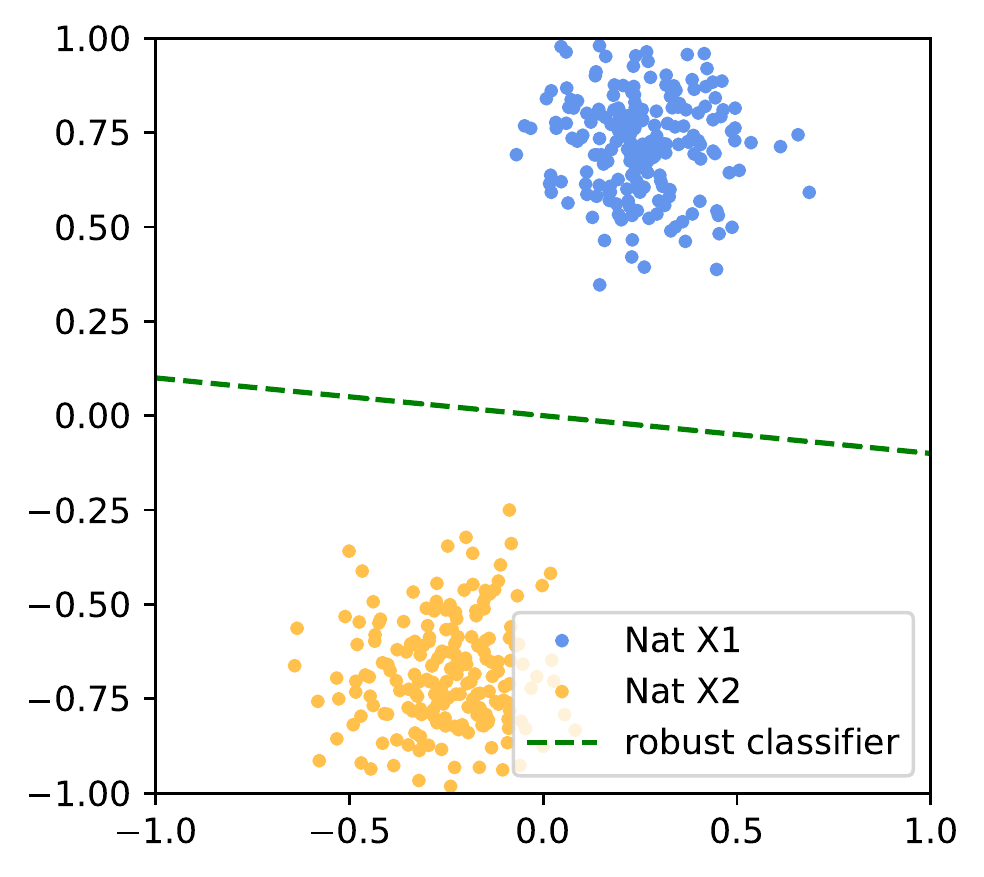}
			\vspace{-6pt}
		\end{minipage}
		\begin{minipage}[t]{.24\linewidth}
			\centering
			\includegraphics[height=3.4cm]{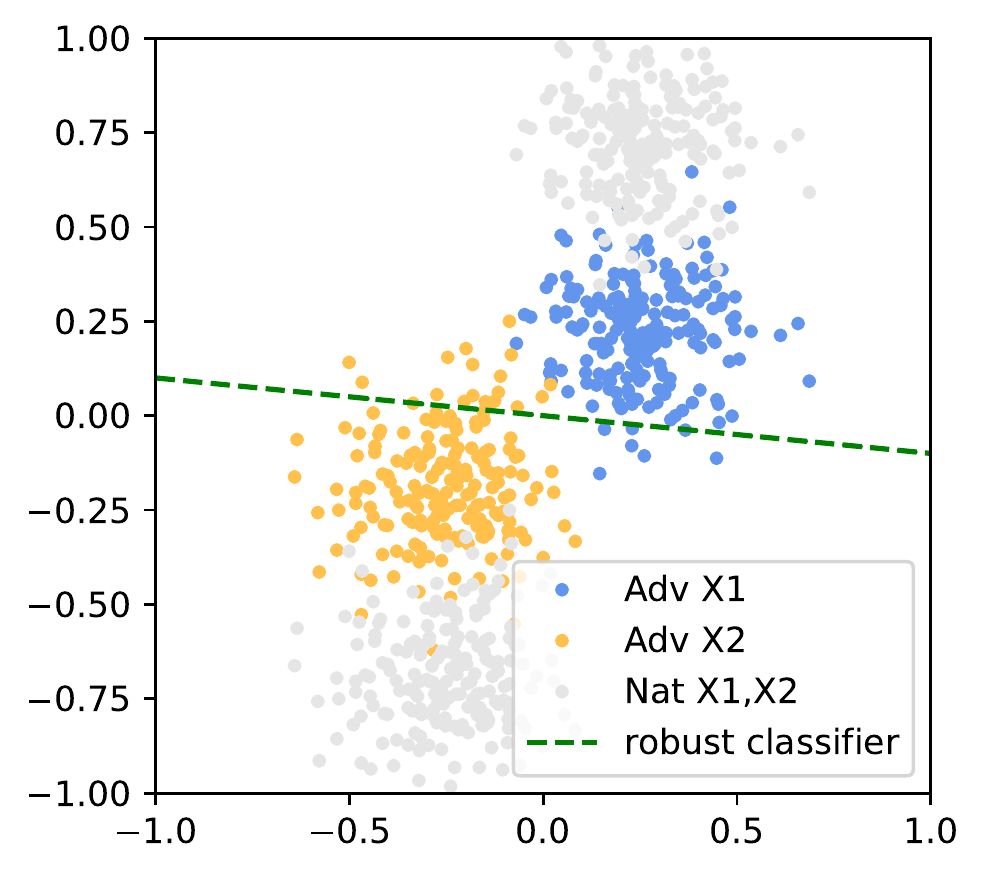}	
			\vspace{-6pt}
		\end{minipage}
		\begin{minipage}[t]{.24\linewidth}
			\centering
			\includegraphics[height=3.4cm]{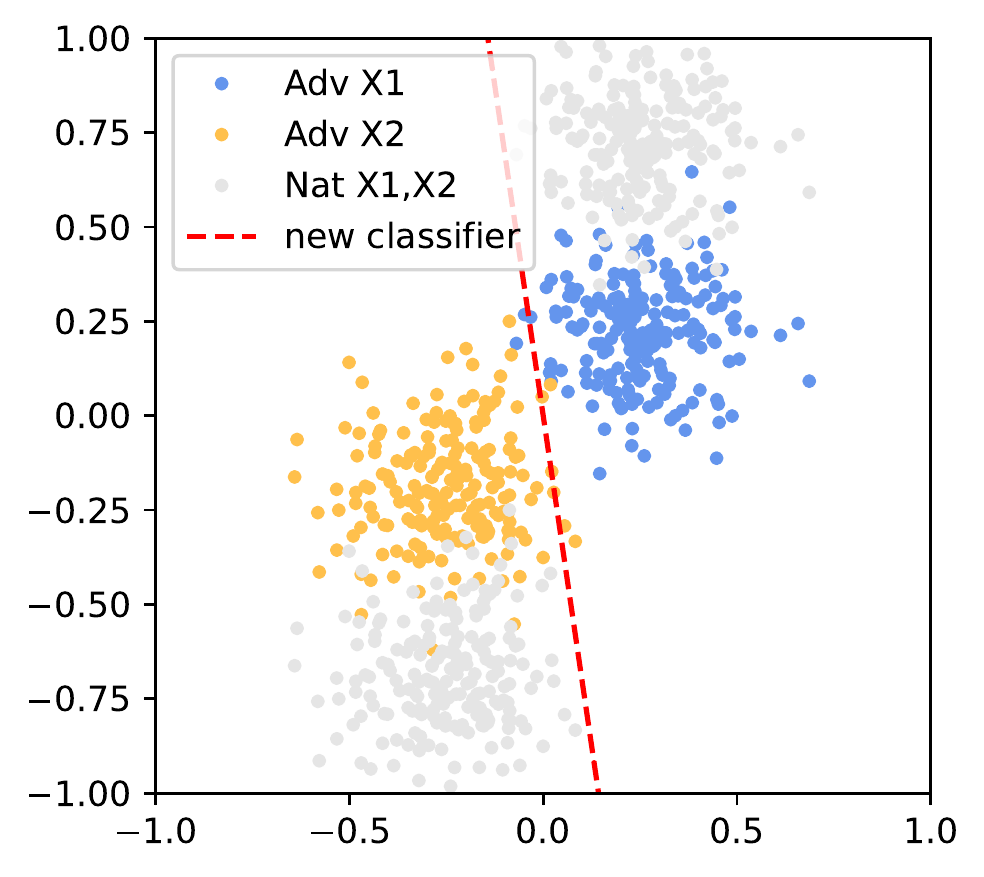}	
			\vspace{-6pt}
		\end{minipage}
		\begin{minipage}[t]{.24\linewidth}
			\centering
			\includegraphics[height=3.4cm]{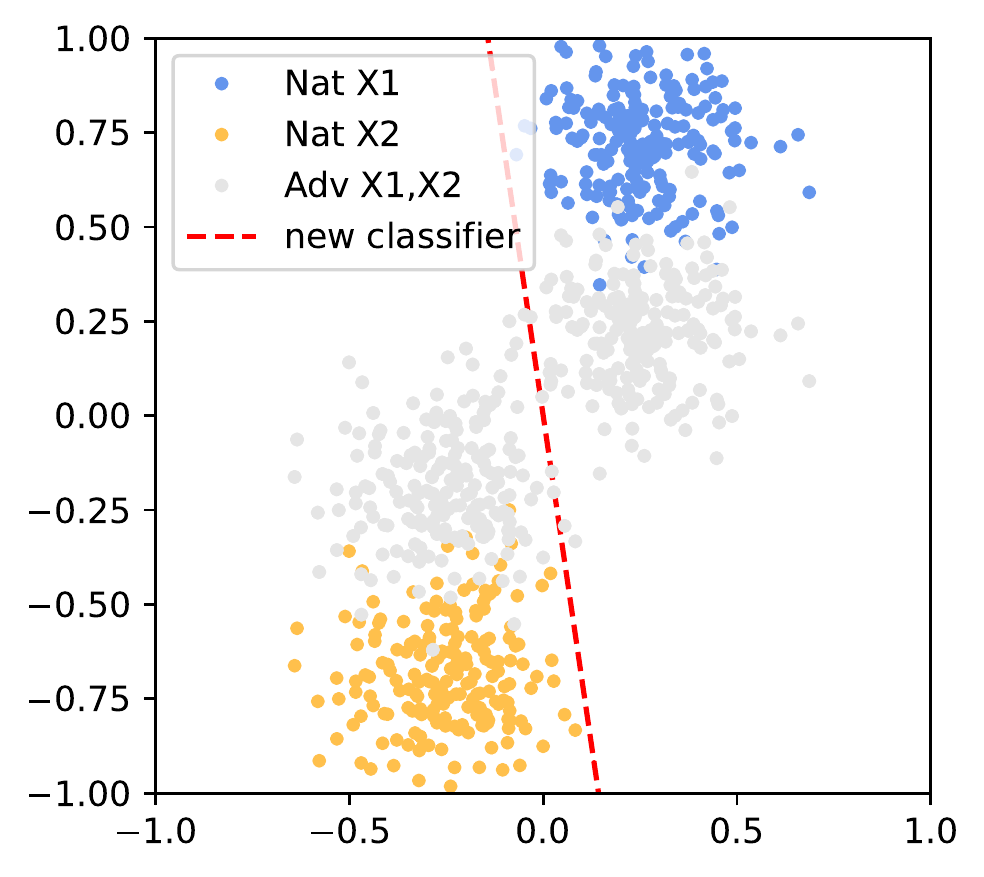}	
			\vspace{-6pt}
		\end{minipage}
		
		\caption{The process (from left to right) on how natural training on adversarial examples of a robust classifier leads to a non-robust model. The green line is the robust classifier used to generate adversarial examples, and the red line is the new classifier naturally trained on the adversarial examples of a robust classifier. ``Nat $X1$'' and ``Nat $X2$'' stand for natural data of class 1 and 2, respectively, and ``Adv $X1$'' and ``Adv $X2$'' are their adversarial counterparts. From the 4th plot we can see that the new classifier is non-robust.}
		\label{fig:advrobust}
		\vspace{-3pt}
	\end{figure*}

% \textbf{Synthetic data}
\begin{table*}[]
\caption{Experiments with different size of robust dataset on MNIST, CIFAR10, and TinyImageNet, where we naturally train classifiers on the datasets created by different methods.}
\vspace{-5pt}
\label{tab:diffsize}
\resizebox{1.0\textwidth}{!}{%
\begin{tabular}{clcccc}
\bottomrule
\multirow{2}{*}{Robust acc (\%) / Natural acc (\%)}          &  \multirow{2}{*}{Threat model}        & \multirow{2}{*}{Natural dataset} & \multicolumn{3}{c}{Robust dataset (ours)}\\
& & & 10\% size   & 20\% size   & 100\% size  \\ \hline\hline
\multirow{2}{*}{MNIST}        & 0.2 ($\ell_\infty$)   & 8.28/98.10      & 25.65/96.11 & 41.87/96.83 & 52.36/97.29 \\
                              & 2.0 ($\ell_2$)          & 21.28/98.10     & 34.47/96.75 & 40.21/97.59 & 48.40/98.25 \\ \hline
\multirow{2}{*}{CIFAR10}        & 2/255 ($\ell_\infty$) & 6.28/93.23      & 37.69/83.26 & 43.87/86.10 & 54.74/87.19 \\
                              & 0.25 ($\ell_2$)       & 9.85/93.23      & 42.61/81.50  & 45.54/82.80  & 59.52/86.59 \\ \hline
\multirow{2}{*}{TinyImageNet} & 2/255 ($\ell_\infty$) & 0.34/70.94      & 15.64/60.58 & 17.75/60.35 & 25.43/60.02 \\
                              & 0.25 ($\ell_2$)       & 4.55/70.94      & 24.38/63.42 & 26.71/62.71 & 39.55/61.13 \\ \bottomrule
\end{tabular}
% \begin{tabular}{c|cccccccc}
% \bottomrule
% Robust / Natural acc & \multicolumn{2}{c}{natural dataset}   & \multicolumn{2}{c}{10\% size of robust dataset} & \multicolumn{2}{c}{20\% size of robust dataset} & \multicolumn{2}{c}{100\% size of robust dataset} \\ \hline
%                      & 0.2 ($\ell_\infty$)   & 2.0 ($\ell_2$)    & 0.2 ($\ell_\infty$)         & 2.0 ($\ell_2$)          & 0.2 ($\ell_\infty$)         & 2.0 ($\ell_2$)          & 0.2 ($\ell_\infty$)          & 2.0 ($\ell_2$)          \\ \hline
% MNIST                & 8.28/98.10          & 21.28/98.10   & 25.65/96.11               & 34.47/96.75         & 41.87/96.83               & 40.21/97.59         & 52.36/97.29                & 48.40/98.25         \\ \hline\hline
%                      & 2/255 ($\ell_\infty$) & 0.25 ($\ell_2$) & 2/255 ($\ell_\infty$)       & 0.25 ($\ell_2$)       & 2/255 ($\ell_\infty$)       & 0.25 ($\ell_2$)       & 2/255 ($\ell_\infty$)        & 0.25 ($\ell_2$)       \\ \hline
% CIFAR                & 6.28/93.23          & 9.85/93.23    & 37.69/83.26               & 42.61/81.5          & 43.87/86.10               & 45.54/82.8          & 54.74/87.19                & 59.52/86.59         \\
% TinyImageNet         & 0.34/70.94                     &    4.55/70.94           &         15.64/60.58              &      24.38/63.42               &       17.75/60.35                   &          26.71/62.71           &            25.43/60.02                &      39.55/61.13               \\ \hline
% \end{tabular}
}
\end{table*}
% \subsubsection{MNIST}\label{sec:MNIST}
\medskip
\noindent\textbf{Experiment setup.} We use a CNN which has two convolutional layers, followed by two fully-connected layers for MNIST. We apply ResNet-18 for CIFAR10 and TinyImageNet. The output size of the last layer is the number of classes of each dataset. 
In our robust dataset learning algorithm (\autoref{alg:1}), we set $\theta_0$ to be the weights of the classifier. 
During training, we use PGD-20 to generate adversarial samples for MNIST and PGD-10 for the other two datasets. 
The step size of PGD attack is selected as $\epsilon/10$. 
Besides, we set the learning rate of the classifier to $\gamma=0.01$ and the learning rate of the robust dataset to $\beta = 0.5/255$. 
For the baseline methods, the datasets are generated using PGD attacks on natural and robust pre-train models (See Adv. data of natural and robust classifier in \autoref{tab:robust dataset}). 
To generate robust pre-train models, we train TRADES \cite{zhang2019theoretically} using the corresponding budgets. 
During evaluation phase, we use SGD with learning rate 0.01, momentum 0.9, and weight decay 5e-4, to fine tune the same model on all datasets.

\medskip
\noindent\textbf{Result analysis.} \autoref{tab:robust dataset} illustrates the experimental results on the three datasets. Compared to the baseline methods, the classifier naturally trained on our robust dataset achieves nearly 10\% increase on the robust accuracy on all tasks and attacks. We also notice that the classifier trained on the adversarial examples of robust classifier suffers from poor robust accuracy. We provide a simple example (\autoref{fig:advrobust}) to show that natural training on adversarial examples of a robust classifier may lead to a non-robust model.

\medskip
\noindent\textbf{Why do we not compare with adversarial training?} There are two reasons: 1) while the output of adversarial training is a classifier, in the robust dataset learning task, the input of our evaluation benchmark is a dataset on which we would natually train a classifier. Therefore, adversarial training does not fit our evaluation benchmark. Instead, we modify adversarial training as another baseline ``adversarial data of robust classifier'' in Table \ref{tab:robust dataset}. 2) We remark that our work is not aiming to set a new SOTA benchmark for adversarial defense, but rather to design a time-efficient method that benefits scenarios with limited computational resources. For example, learning a robust CIFAR10 dataset takes around 2 hours on a NVIDIA RTX A5000 GPU; fine tuning a classifier on our robust dataset takes at most 10 minutes. However, adversarial training, e.g., PGD Adversarial Training~\cite{madry2017towards}, TRADES \cite{zhang2019theoretically}, takes more than one day on the same GPU.

% \textbf{Result analysis.}
% \begin{table}[t]
% \label{TINYIMGNET}
% \caption{Tiny Imagenet results (Under PGD)}
% \centering
% %\resizebox{1.0\textwidth}{!}{%
% \begin{tabular}{l|cccc}
% \hline
% Robust acc / Natural acc         & 2/255 ($\ell_\infty$)  & 4/255 ($\ell_\infty$)  & 0.25 ($\ell_2$)  & 0.5 ($\ell_2$)  \\ \hline\hline
% Natural dataset                       & 0.34/70.94          & 0.16/70.94          & 4.55/70.94    & 0.52/70.94   \\ 
% Adv. data of natural classifier & 12.96/65.22         & 5.10/65.13         & 29.97/65.98   & 9.36/64.14  \\  \hline
% % Adv. data of robust classifier  &                     &                     &               &              \\ \hline
% Robust dataset (ours)                       & \textbf{25.43}/60.02         & \textbf{18.42}/60.36         & \textbf{39.55}/61.13   & \textbf{25.48}/60.92  \\ \hline
% \end{tabular}
% %}
% \end{table}

\begin{figure*}[t]
		\begin{minipage}[t]{.24\linewidth}
			\centering
			\includegraphics[height=2.8cm]{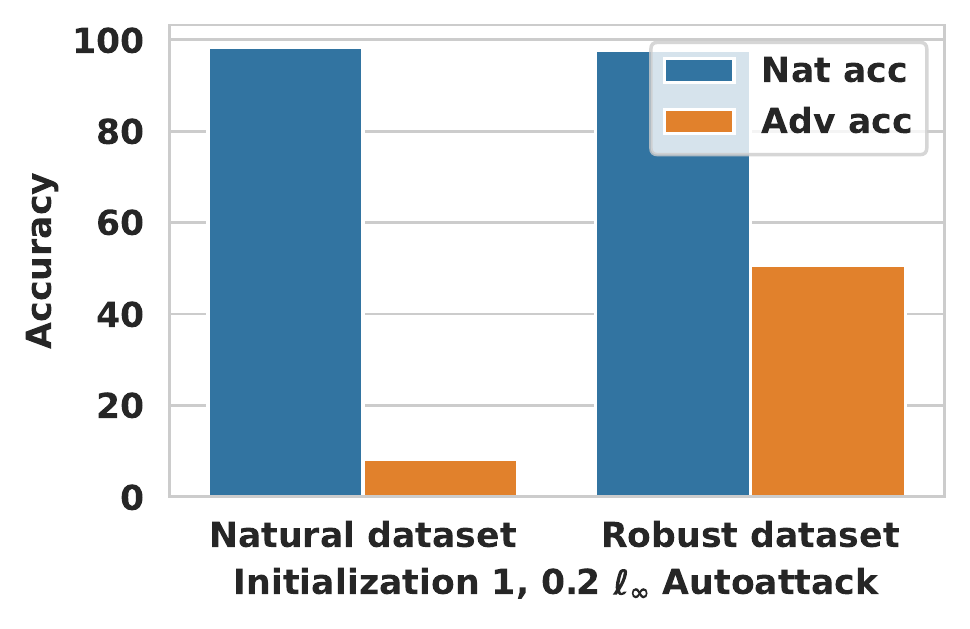}	
			\vspace{-6pt}
		\end{minipage}
		\begin{minipage}[t]{.24\linewidth}
			\centering
			\includegraphics[height=2.8cm]{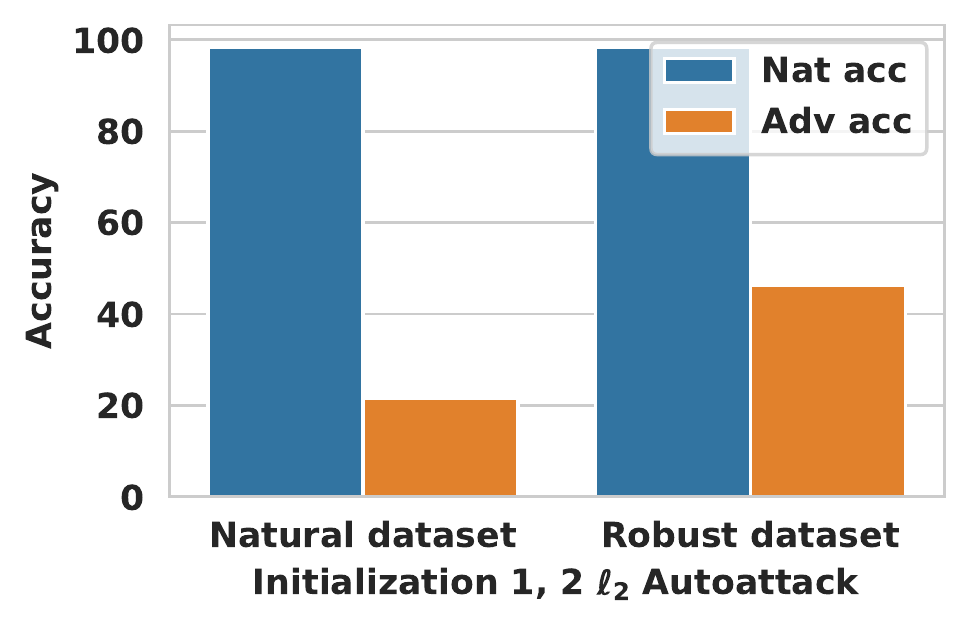}	
			\vspace{-6pt}
		\end{minipage}
		\begin{minipage}[t]{.24\linewidth}
			\centering
			\includegraphics[height=2.8cm]{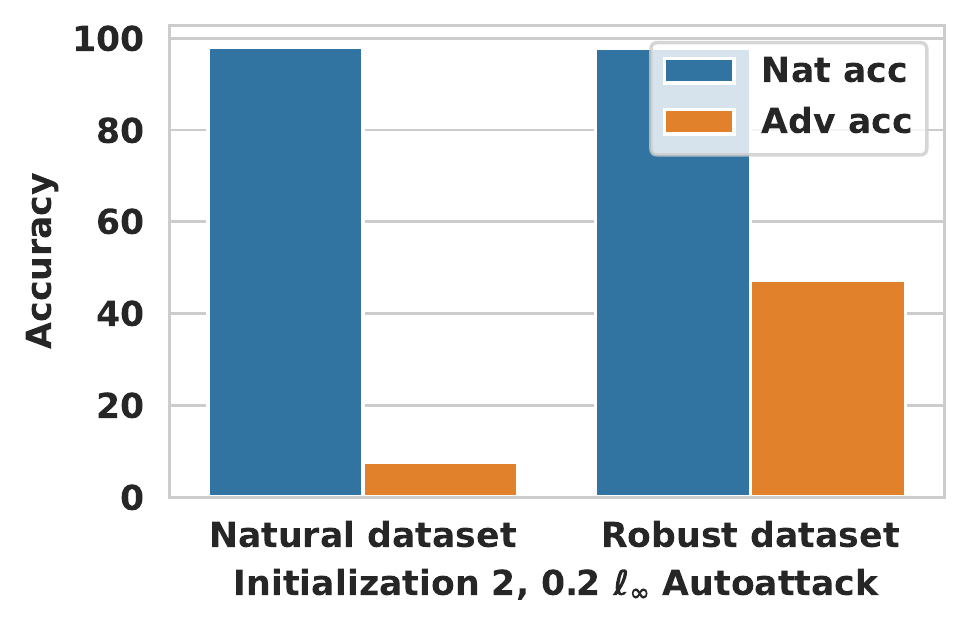}	
			\vspace{-6pt}
		\end{minipage}
		\begin{minipage}[t]{.24\linewidth}
			\centering
			\includegraphics[height=2.8cm]{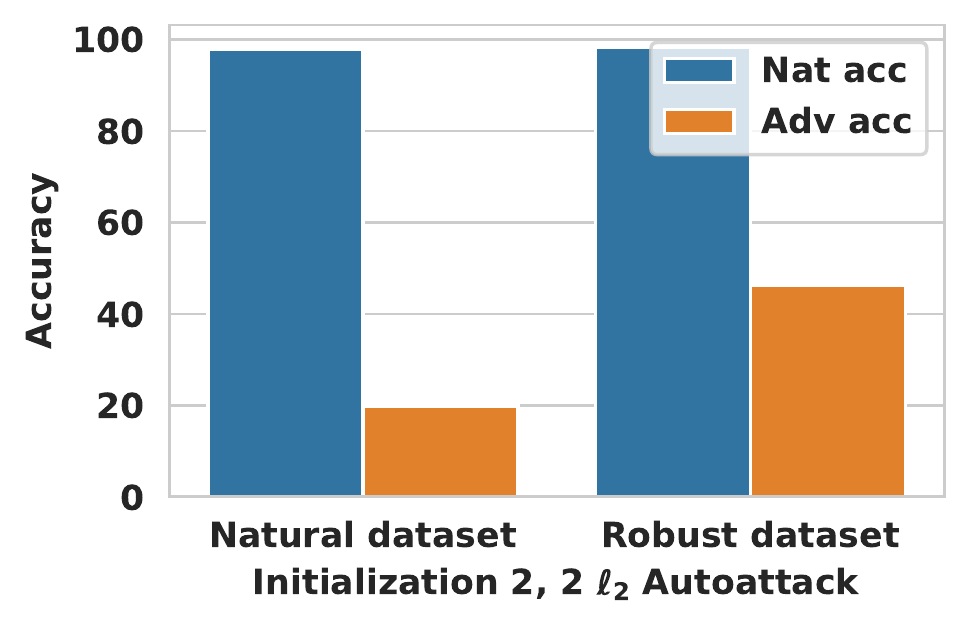}	
			\vspace{-6pt}
		\end{minipage}
	\\
% 	\subfigure*[]{
		\begin{minipage}[t]{.24\linewidth}
			\centering
			\includegraphics[height=2.9cm]{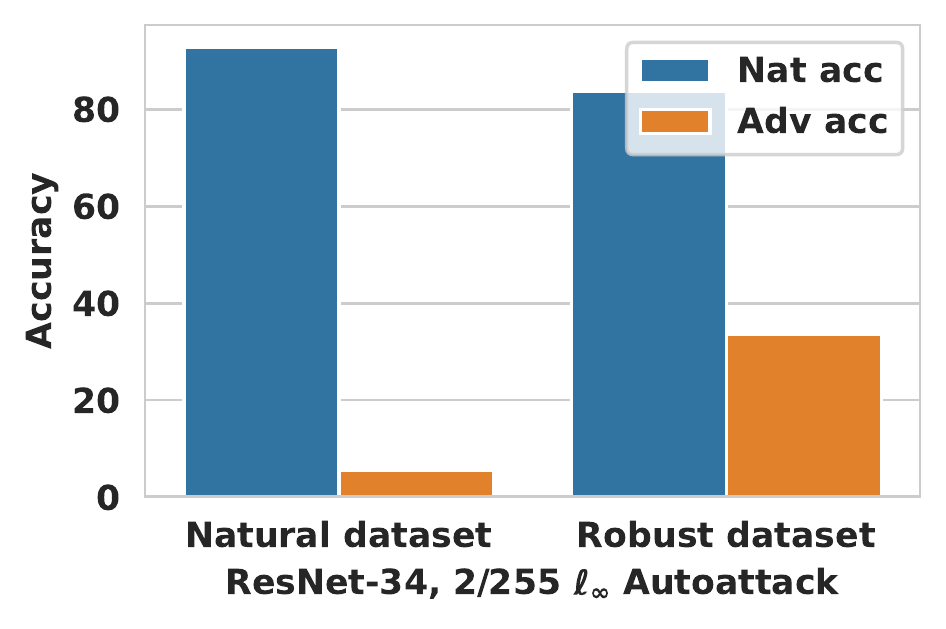}	
			\vspace{-6pt}
		\end{minipage}
% 	}
% 	\subfigure[]{
		\begin{minipage}[t]{.24\linewidth}
			\centering
			\includegraphics[height=2.9cm]{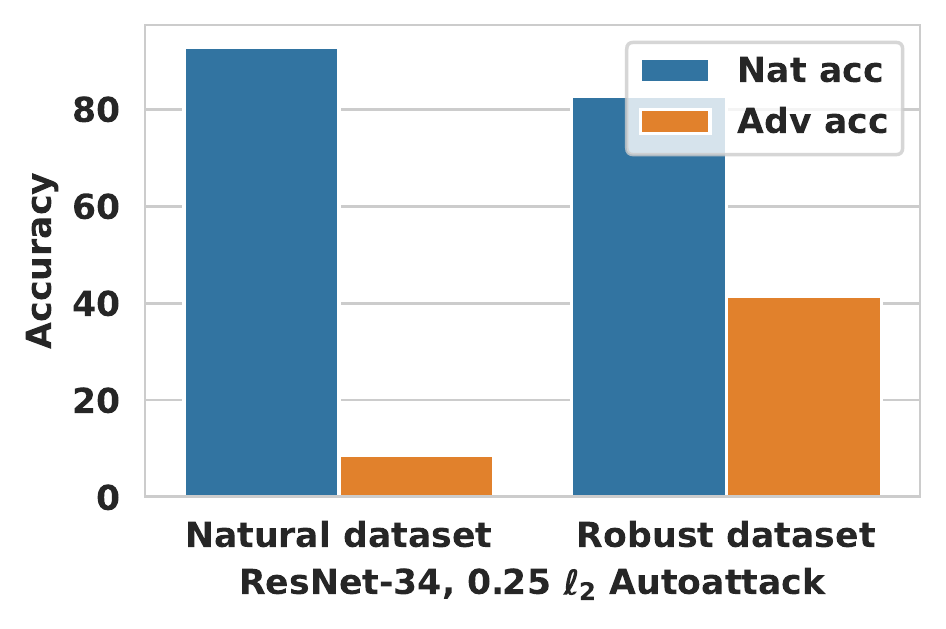}	
			\vspace{-6pt}
		\end{minipage}
% 	}
% 	\subfigure[]{
		\begin{minipage}[t]{.24\linewidth}
			\centering
			\includegraphics[height=2.9cm]{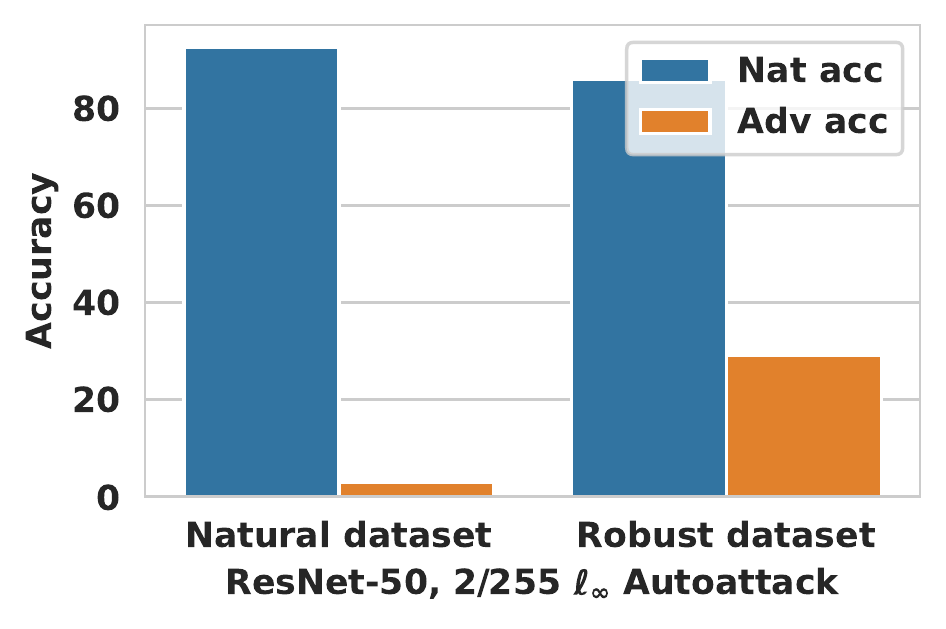}	
			\vspace{-6pt}
		\end{minipage}
		\begin{minipage}[t]{.24\linewidth}
			\centering
			\includegraphics[height=2.9cm]{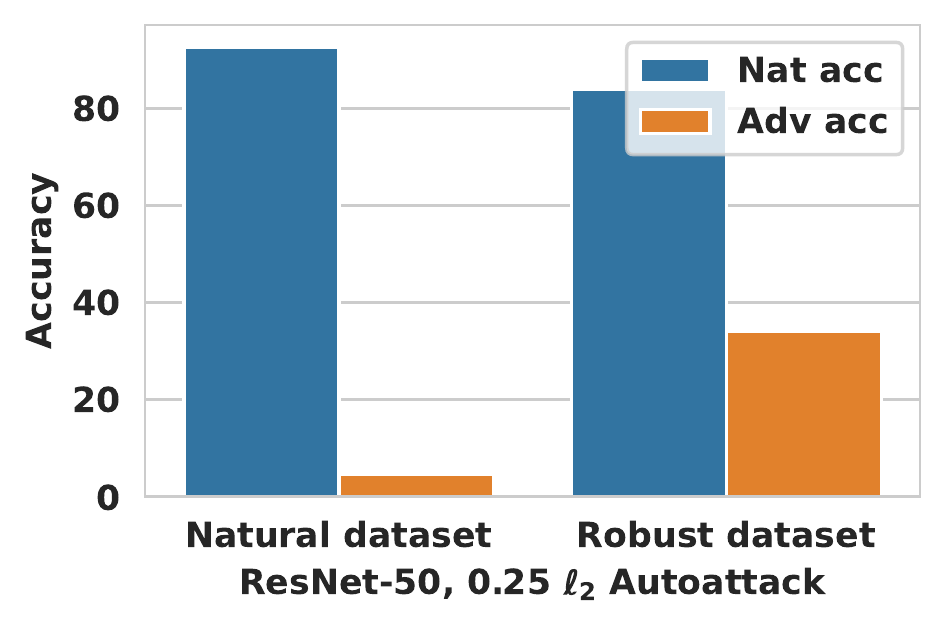}	
			\vspace{-6pt}
		\end{minipage}
% 	}
	\vspace{-10pt}

\caption{Experiments of transferability of our robust dataset. \textbf{Top:} Different initialization of the same CNN model on MNIST. We fine tune two CNN models (initialization 1 and initialization 2) on the robust MNIST dataset generated by a third initialization. \textbf{Bottom:} Different architectures on CIFAR10. We fine tune one ResNet-34 and one ResNet-50 models on the robust CIFAR10 generated by ResNet-18. }
\vspace{-12pt}
\label{fig:diffarch}
\end{figure*}

\subsection{Ablation study}
In this part, we conduct ablation experiments to study the effect of dataset size and the transferability of our robust dataset to different initialization and architectures. The robust dataset learning settings and evaluation methods are the same as in \autoref{sec:robustness}.

\medskip
\noindent\textbf{Dataset size.} We study the effect of different robust dataset size, where the robust dataset size is constrained to only 10\% and 20\% of the original dataset. We evaluate the robustness of classifiers under several adversarial budges for MNIST, CIFAR10 and TinyImageNet. The results are shown in \autoref{tab:diffsize}. We find that even if trained with 10\% of the size of the original dataset, the resulting classifier can still achieve 42.61\% robust accuracy on CIFAR10 and 24.38\% robust accuracy on TinyImageNet.

\medskip
\noindent\textbf{Different network initializations.} We evaluate the transferability of our robust dataset by using different network initializations. In our experiments, we apply the same CNN with different initializations to evaluate the classifier trained with MNIST robust dataset under 0.2 $\ell_{\infty}$ and  2.0 $\ell_{2}$ Autoattacks. In \autoref{fig:diffarch} Top, we see that our robust dataset effectively improves the robustness of the classifier with different network initializations.

\medskip
\noindent\textbf{Different network architectures.} We also investigate the case where the naturally trained network during testing has a different architecture from that we use to learn the robust dataset. In the experiments, we use our robust dataset to fine-tune ResNet-34 and ResNet-50 models. We evaluate the adversarial robustness under 2/255 $\ell_{\infty}$ and  0.25 $\ell_{2}$ Autoattack. In \autoref{fig:diffarch} Bottom, we see that our robust dataset enjoys descent transferability across different network architectures.

\medskip
\noindent\textbf{Different random seeds.} Due to the expensive cost of the Autoattack, we are not able to report the confidence score in \autoref{tab:robust dataset}. We repeat the experiments in \autoref{tab:robust dataset} with 10 different random seeds on 0.2 $\ell_\infty$ Autoattack on MNIST and 2/255 $\ell_\infty$ Autoattack on CIFAR10, the results are reported in \autoref{tab:seed}. From the table we see that our model share similar performance on different random seeds.
% \begin{table}[]
% \caption{MNIST initialization, should be in histogram.}
% \begin{tabular}{|l|ll|ll|}
% \hline
%                          & \multicolumn{2}{l|}{Initialization 1}                & \multicolumn{2}{l|}{Initialization 2}                \\ \hline
% Robust acc / Natural acc & \multicolumn{1}{l|}{0.2 $\ell_\infty$} & 2 $\ell_2$  & \multicolumn{1}{l|}{0.2 $\ell_\infty$} & 2 $\ell_2$  \\ \hline
% natural classifier       & \multicolumn{1}{l|}{8.26/98.33}        & 21.57/98.33 & \multicolumn{1}{l|}{7.65/97.92}        & 19.86/97.92 \\ \hline
% Our robust dataset       & \multicolumn{1}{l|}{50.68/97.76}       & 46.35/98.31 & \multicolumn{1}{l|}{47.32/97.79}       & 46.30/98.24 \\ \hline
% \end{tabular}
% \end{table}

% \begin{table}[]
% \caption{CIFAR archi, should be in histogram.}
% \begin{tabular}{|l|ll|ll|}
% \hline
%                          & \multicolumn{2}{l|}{ResNet-34}                           & \multicolumn{2}{l|}{ResNet-50}                           \\ \hline
% Robust acc / Natural acc & \multicolumn{1}{l|}{2/255 $\ell_\infty$} & 0.25 $\ell_2$ & \multicolumn{1}{l|}{2/255 $\ell_\infty$} & 0.25 $\ell_2$ \\ \hline
% natural classifier       & \multicolumn{1}{l|}{5.59/92.83}          & 8.51/92.83    & \multicolumn{1}{l|}{2.99/92.50}          & 4.66/92.60    \\ \hline
% Our robust dataset       & \multicolumn{1}{l|}{33.60/83.78}         & 41.48/82.69   & \multicolumn{1}{l|}{29.21/85.92}         & 34.09/83.81   \\ \hline
% \end{tabular}
% \end{table}

% \subsection{Randomized smoothing}

% \subsection{Visualization}
\begin{table}[]
\caption{Experiments with different random seeds on CIFAR10 and MNIST.}
\vspace{-5pt}
\label{tab:seed}
\begin{tabular}{lc}
\bottomrule
Robust dataset (ours)                      & MNIST                       \\ \hline
\begin{tabular}[c]{@{}c@{}}Robust acc / Natural acc\\ 0.2 ($\ell_\infty$)\end{tabular}  & 52.51 ± 0.59 / 98.14 ± 0.41 \\ \hline
                                                                                        & CIFAR10                     \\ \hline
\begin{tabular}[c]{@{}c@{}}Robust acc / Natural acc\\ 2/255($\ell_\infty$)\end{tabular} & 54.37 ± 0.71/ 97.29 ± 0.55  \\ \bottomrule
\end{tabular}
\vspace{-10pt}
\end{table}
% \begin{table}[]
% \begin{tabular}{lc}
% \hline
%                          & MNIST                       \\ \hline
% Robust acc / Natural acc & 0.2 ($\ell_\infty$)         \\ 
% Robust dataset (ours)    & 52.51 ± 0.59 / 98.14 ± 0.41 \\ \hline
%                          & CIFAR10                     \\ \hline
% Robust acc / Natural acc & 2/255($\ell_\infty$)        \\ 
% Robust dataset (ours)    & 54.37 ± 0.71/ 97.29 ± 0.55  \\ \hline
% \end{tabular}
% \end{table}
\vspace{-1pt}
\section{Conclusion}
\vspace{-1pt}
In this work, we propose a principle, tri-level optimization algorithm to solve the robust dataset learning problem. We theoretically prove the guarantee of our algorithm on an abstraction model, and empirically verify its effectiveness and efficiency on three popular image classification datasets. 
Our proposed algorithm provides a principled way to integrate the property of adversarial robustness into a dataset.
The evaluation results of the method imply various real-world applications under scenarios with limited computational resources.
% We hope our robust dataset can be applied to obtain a robust model under scenarios with limited computational resources.
%%%%%%%%% REFERENCES
{\small
\bibliographystyle{ieee_fullname}
\bibliography{main}
}

\clearpage
\appendix
\onecolumn
\section{Additional proofs}\label{sec:missing proof}
\begin{lemma}(Lemma D.1 in \cite{tsipras2018robustness})\label{lm:odds1}
The optimal solution $\w^*=(w_1^*,...,w_{d+1}^*)$ of our optimization problem \autoref{eq:svm} must satisfy $w_2=...=w_{d+1}$ and $\sign(w_i)\geq0,i\in[d+1]$.
\end{lemma}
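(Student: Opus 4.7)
The statement splits into two independent claims: (i) the equality $w_2^* = \cdots = w_{d+1}^*$, and (ii) the non-negativity $w_i^* \geq 0$ for every $i \in [d+1]$. I would establish (i) first, since reducing to a two-dimensional problem in $(w_1, v)$ with $v := w_2^* = \cdots = w_{d+1}^*$ makes the sign analysis much cleaner. For (i), the key fact is that, conditional on $y$, the coordinates $x_2, \ldots, x_{d+1}$ are i.i.d.\ $\cN(\mu y, 1)$; thus $\cD$ is invariant under every permutation $\sigma$ of the index set $\{2, \ldots, d+1\}$, giving $\cL(\w; \cD) = \cL(\w^\sigma; \cD)$, where $\w^\sigma$ permutes the last $d$ coordinates of $\w$. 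The hinge term is convex in $\w$ and $\lambda \|\w\|_2^2$ is strictly convex (since $\lambda > 0$), so $\cL(\cdot;\cD)$ is strictly convex and admits a unique minimizer $\w^*$. Uniqueness combined with the fact that $(\w^*)^\sigma$ is also a minimizer for every $\sigma$ immediately forces $w_2^* = \cdots = w_{d+1}^*$.

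For the non-negativity of $v$, I would write $s := y x_1 \in \{\pm 1\}$ (which equals $1$ with probability $p$) and $S := \sum_{i=2}^{d+1} y x_i \sim \cN(d\mu, d)$, so that $y \w^T \x = w_1 s + v S$. Setting $S = d\mu + \sqrt{d}\,T$ with $T \sim \cN(0, 1)$ and using the symmetry $T \stackrel{d}{=} -T$, the term $v\sqrt{d}\,T$ can be replaced by $|v|\sqrt{d}\,T$ in distribution, rewriting the hinge contribution as $\bbE\!\left[\max\{0, 1 - w_1 s - v d\mu - |v|\sqrt{d}\,T\}\right]$. This isolates the dependence on $\sign(v)$ in the single deterministic shift $-v d\mu$. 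Comparing $\cL(w_1^*, v)$ with $\cL(w_1^*, -v)$ for $v > 0$: the regularizer and the $|v|$-term are identical, while the hinge argument at $+v$ is pointwise smaller by $2 v d\mu > 0$. Monotonicity of $t \mapsto \max\{0, t\}$ then yields $\cL(w_1^*, v) \leq \cL(w_1^*, -v)$. If $v^* < 0$, the reflection $(w_1^*, -v^*)$ attains no larger loss and must also be a minimizer, so uniqueness forces $v^* = -v^* = 0$, contradicting $v^* < 0$; hence $v^* \geq 0$.

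The argument for $w_1^* \geq 0$ is analogous but exploits the Bernoulli structure in $s$ in place of Gaussian symmetry. Defining $h(z) := \bbE\!\left[\max\{0, 1 - z - v d\mu - |v|\sqrt{d}\,T\}\right]$, which is non-increasing in $z$ as an expectation of a non-increasing function, the hinge part of $\cL$ equals $p\, h(w_1) + (1-p)\, h(-w_1)$. The change in loss under $w_1 \mapsto -w_1$ is therefore $(2p - 1)\bigl(h(w_1) - h(-w_1)\bigr)$. In the working regime $p > 1/2$ we have $2p - 1 > 0$, and for $w_1 > 0$ monotonicity of $h$ gives $h(w_1) \leq h(-w_1)$, so a negative $w_1^*$ is dominated by its reflection and uniqueness again rules it out. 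The main delicate step is the Gaussian symmetry trick in the second paragraph, which is what lets the comparison between $v$ and $-v$ collapse to a single deterministic shift inside a monotone hinge expectation; once that identity is in place, the remaining sign analyses are short monotonicity checks.
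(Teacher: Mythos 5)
Your proof is correct and follows essentially the same route as the paper's: both parts rest on the permutation symmetry of the Gaussian coordinates combined with the strict convexity contributed by the $\lambda\|\w\|_2^2$ term (the paper averages $\w^*$ with its coordinate-swapped copy to obtain a strictly better point, which is just the explicit form of your uniqueness argument), and both handle the signs by a reflection of the offending coordinate. If anything, your sign step is more careful than the paper's, which only asserts a weak inequality for the reflected weight and leaves implicit both the role of $p>1/2$ (and $\mu>0$) and the appeal to uniqueness needed to turn that weak inequality into a contradiction.
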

\begin{proof}
We prove this lemma by contradiction, assume w.l.o.g. the optimal solution $\w^* = (w_1^*,...,w_{d+1}^*)$ satisfying $w_2^*\neq w_3^*$, we can let $\w' = (w_1^*,w_3^*,w_2^*,w_4^*,...,w_{d+1}^*)$.  In this case, we have $y\w^{*T}\x = y\w'^{T}\x$ because both $w_2^*x_2+w_3^*x_3$ and $w_2^*x_3+w_3^*x_2$ follow $\cN((w_2^*+w_3^*)\mu y,w_2^{*2}+w_3^{*2})$. So we have $\cL(\w';\cD)=\cL(\w^*;\cD)$. Moreover, since the margin term $\bbE_{(\x,y)}[\max\{0,1-y\w^T\x\}]$ is convex in $\w$, by Jensen’s inequality, averaging $\w^*$ and $\w'$ will not increase the value of that margin term.
On the other hand, $||\frac{\w^*+ \w'}{2}||_2^2< ||\w^*||_2^2$ as $2(\frac{w_2^*+w_3^*}{2})^2< w_2^{*2}+w_3^{*2}$ when $w_2^*\neq w_3^*$. Thus $\cL(\frac{\w^*+ \w'}{2};\cD)<\cL(\w^*;\cD)$, which yields contradiction. Analogously, if there exists $i$, such that $\sign(w^*_i)<0$, let  $\w' = (w_1^*,...,-w_i^*,...,w_{d+1}^*)$, we have $||\w'||_2^2 = ||\w^*||_2^2$ and $\bbE_{(\x,y)}[\max\{0,1-y\w'^T\x\}]\leq \bbE_{(\x,y)}[\max\{0,1-y\w^{*T}\x\}]$, which yields another contradiction.
\end{proof}

\begin{lemma}(Lemma D.2 in \cite{tsipras2018robustness})\label{lm:odds2}
If $\mu\geq\frac{4}{\sqrt{d}}$ and $p\leq 0.975$, the optimal solution $\w^*=(w_1^*,...,w_{d+1}^*)$ of our optimization problem \autoref{eq:svm} must satisfy $w_1^*<\sqrt{d}w_2^*$.
\end{lemma}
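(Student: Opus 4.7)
The plan is to prove this by contradiction using the first-order optimality conditions of the convex SVM program, mirroring the strategy of \cite{tsipras2018robustness}.

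First, I would use \autoref{lm:odds1} to collapse the $(d{+}1)$-dimensional optimization to two scalar variables. That lemma gives $w_2^{*}=\cdots=w_{d+1}^{*}=:b\geq 0$ and $w_1^{*}=:a\geq 0$, so the loss simplifies to
\[
F(a,b) \;=\; p\,\bbE[(1-a-bS)_+] + (1-p)\,\bbE[(1+a-bS)_+] + \lambda(a^2+db^2),
\]
where $S := y\sum_{i=2}^{d+1}x_i \sim \cN(d\mu,d)$ and $(\cdot)_+ := \max\{0,\cdot\}$. To put the two axes on equal footing I would rescale $c := \sqrt{d}\,b$, making the regularizer isotropic $\lambda(a^2+c^2)$ and writing $S = \sqrt{d}\,T$ with $T\sim\cN(m,1)$, $m := \mu\sqrt{d} \geq 4$. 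The target inequality $w_1^{*} < \sqrt{d}\,w_2^{*}$ then becomes simply $a < c$.

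Second, assume for contradiction that $a \geq c$ at the optimum. Writing the first-order stationarity conditions in the rescaled coordinates via the truncated-Gaussian identity $\bbE[T\,\mathbf{1}\{T<t\}] = m\Phi(t-m) - \phi(t-m)$ and setting $t_1 := (1-a)/c$, $t_2 := (1+a)/c$, the two equations read
\[
2\lambda a \;=\; p\,\Phi(t_1-m) - (1-p)\,\Phi(t_2-m),
\]
\[
2\lambda c \;=\; m\bigl[p\,\Phi(t_1-m)+(1-p)\,\Phi(t_2-m)\bigr] - \bigl[p\,\phi(t_1-m)+(1-p)\,\phi(t_2-m)\bigr].
\]
Dividing eliminates $\lambda$ and expresses $a/c$ purely in terms of $m$, $p$, and four Gaussian integrals. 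The intuitive content is that the numerator is capped by the reliability $p\leq 0.975$ of the single strongly-correlated feature, while the denominator is amplified by the factor $m\geq 4$ that reflects the pooled signal of the $d$ weakly-correlated features, so the ratio is forced strictly below one.

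The main obstacle is the Gaussian-tail bookkeeping needed to turn this intuition into a rigorous inequality under the explicit constants. In the regime $a\geq c$ one has $t_1-m\leq -m\leq -4$, and by Mills' ratio $\phi(-m)$ and $m\Phi(-m)$ are of comparable order, so the subtractive term $p\,\phi(t_1-m)$ in the denominator threatens to cancel $m\,p\,\Phi(t_1-m)$. I would control this using the two-sided Mills bound $\tfrac{x}{x^2+1}\phi(x) \le \Phi(-x) \le \tfrac{1}{x}\phi(x)$ for $x>0$ and carefully track the $t_2$-contribution, where $t_2-m\geq -m + 2/c$ sits in a much less extreme tail, so that the $(1-p)$-weighted terms --- small as they are --- push the denominator strictly above the numerator. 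Should the direct KKT-ratio route prove too delicate, an alternative is a swap argument: the closed form $\bbE[(\alpha+\beta Z)_+] = \alpha\Phi(\alpha/\beta) + \beta\phi(\alpha/\beta)$ reduces $F(a,c)$ to an elementary function of its arguments, and one can verify $F(c,a) < F(a,c)$ whenever $a>c$ under the stated constants, directly contradicting optimality.
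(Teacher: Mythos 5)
Your setup is sound and your overall strategy (contradiction from $a\ge c$) matches the paper's, but your primary route through the stationarity conditions is genuinely different from what the paper does, and it stops short of the actual crux. The reduction via Lemma~\ref{lm:odds1}, the rescaling $c=\sqrt{d}\,b$, the two first-order equations, and the truncated-Gaussian identities all check out. However, the step where the hypotheses $\mu\sqrt{d}\ge 4$ and $p\le 0.975$ must do quantitative work is precisely the step you defer. Dividing the two stationarity equations does not ``express $a/c$ purely in terms of $m$, $p$, and four Gaussian integrals'': those integrals depend on $a$ and $c$ through $t_1,t_2$, so what you have is an implicit relation, and you must prove a two-variable inequality (ratio $<1$) over the entire region $\{a\ge c>0\}$. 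The regime you yourself flag as delicate --- $t_1\le 0$, where $m\Phi(t_1-m)-\phi(t_1-m)$ can be negative and the subtractive $\phi$ term nearly cancels the $m\Phi$ term --- requires a genuine case analysis in which the weight ratio $p/(1-p)\le 39$ must beat a ratio of Gaussian tails at $u_2=u_1+2a/c$ versus $u_1$. None of that is carried out, so as written this is a plan whose hardest part is an acknowledged gap. Two further holes: (i) the stationarity equations hold with equality only at interior points, so the boundary case $c=0$ (where the conclusion $a<c$ fails outright) must be excluded separately via the KKT inequality for $c$; (ii) the fallback swap claim $F(c,a)<F(a,c)$ for $a>c$ is likewise asserted, not proved.

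For contrast, the paper's proof avoids stationarity conditions entirely. Assuming $w_1^*\ge\sqrt{d}\,w_2^*$ and normalizing $\|\w^*\|_2=1$ (so $w_2^*\le 1/\sqrt{2d}$), it lower-bounds the loss of $\w^*$ using only the $(1-p)$-branch, obtaining $(1-p)\,\bbE[\max(0,1+1/\sqrt{2}-\cN(\sqrt{d/2}\,\mu,1/2))]$, and compares against the single explicit competitor $\w'=(0,1/\sqrt{d},\dots,1/\sqrt{d})$, whose loss is $\bbE[\max(0,1-\cN(\sqrt{d}\,\mu,1))]$ with the same norm. Optimality of $\w^*$ then forces $p>0.975$, a contradiction established by one numerical evaluation of two one-dimensional Gaussian expectations at $\mu\sqrt{d}=4$. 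If you want to complete your argument, I would either switch to this competitor-comparison route, or commit to the full tail bookkeeping over the region $a\ge c$ --- substantially more work for the same conclusion.
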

\begin{proof}
Assume for the sake of contradiction that $w_1^*\geq\sqrt{d}w_2^*$. By \autoref{lm:odds1} we have $ w_2^*=...=w_{d+1}^*$. Assume w.l.o.g. $||\w^*||_2=1$, we have $w_2^*\leq \frac{1}{\sqrt{2d}}$. Then, with probability at least $1 - p$, the first feature predicts the wrong label and without enough weight, the remaining features cannot compensate for it. Concretely,
\begin{equation}
\begin{split}
   \bbE[\max(0,1-y\w^{*T}\x)]&\geq (1-p)\bbE[\max(0,1+w_1^*-w_2^*\sum_{i=2}^{d+1}\cN(\mu,1))]\\
   &\geq(1-p)\bbE[\max(0,1+\sqrt{d}w_2^*-w_2^*\cN(d\mu,d))]\\
   &\geq(1-p)\bbE[\max(0,1+1/\sqrt{2}-\cN(\sqrt{\frac{d}{2}}\mu,\frac{1}{2}))].\\
   \end{split}
\end{equation}
We will now show that a solution $\w' = (w_1',...,w_{d+1}')$ that assigns zero weight on the first feature ($w_2'=...=w_{d+1}'=\frac{1}{\sqrt{d}}$ and $w_1' = 0$), achieves a better loss. 
\begin{equation}
\begin{split}
   \bbE[\max(0,1-y\w'^{T}\x)]&=\bbE[\max(0,1-\cN(\sqrt{d}\mu,1))].\\
   \end{split}
\end{equation}
By the optimality of $\w^*$ we must have $$\bbE[\max(0,1-\cN(\sqrt{d}\mu,1))] \geq(1-p)\bbE[\max(0,1+1/\sqrt{2}-\cN(\sqrt{\frac{d}{2}}\mu,\frac{1}{2}))],$$ which yields
$p\geq 1-\frac{\bbE[\max(0,1-\cN(\sqrt{d}\mu,1))]}{\bbE[\max(0,1+1/\sqrt{2}-\cN(\sqrt{\frac{d}{2}}\mu,\frac{1}{2}))]}\geq 1-\frac{\bbE[\max(0,1-\cN(4,1))]}{\bbE[\max(0,1+1/\sqrt{2}-\cN(2\sqrt{2},\frac{1}{2}))]}>0.975,$ which contradicts to the condition that $p\leq 0.975$.
\end{proof}
\subsection{Proof of \autoref{thm:natural train}}
\begin{proof}
\textbf{Part 1.} We show that the optimal classifier can achieve high natural accuracy.
By \autoref{lm:odds1} and \autoref{lm:odds2} we have $w_2^*=...=w_{d+1}^*$, $\sign(w_i^*)\geq0,i\in[d+1]$, and $w_1^*\leq\sqrt{d}w_2^*$.
Consider $\w^{*T}\x = w_1^*x_1+w_2^*\sum_{i=2}^{d+1}\cN(\mu y,1) = w_2^*\cN(\frac{w_1^*}{w_2^*}x_1+d\mu y,d)$, because $\epsilon\geq 2\mu$ and $\mu\geq \frac{4}{\sqrt{d}}$, the probability that $\x$ is correctly classified is
\begin{equation}
    \begin{split}
        \Pr(\sign(\w^{*T}\x)=\sign(y)) &= p\Pr(\cN(\frac{w_1^*}{w_2^*}+d\mu,d)>0)\\
        &+(1-p)\Pr(\cN(-\frac{w_1^*}{w_2^*}+d\mu,d)>0)\\
        &\geq p\Pr(\cN(\mu d,d)>0)+(1-p)\Pr(\cN(\mu-\sqrt{d},d)>0)\\
        &\geq p\Pr(\cN(4\sqrt{d},d)>0)+(1-p)\Pr(\cN(3\sqrt{d},d)>0)\\
        &= p\Pr(\cN(4,1)>0)+(1-p)\Pr(\cN(3,1)>0)\\
        &\geq \Pr(\cN(3,1)>0)\\
        &\geq 0.9986.
    \end{split}
\end{equation}
Thus the natural accuracy of the optimal classifier $\w^*$ is greater than 99\%.

\textbf{Part 2.} We show that the optimal classifier achieve low robust accuracy.
Firstly, according to \autoref{lm:pertubation}, the perturbed distribution $\x+\bm\delta$ is given by  
\begin{equation}
    y\sim\{-1,1\},\quad x_1\sim\left\{\begin{aligned}
         y(1-\epsilon),\quad &\textrm{with probability } p;\\
        - y(1+\epsilon),\quad &\textrm{with probability } 1-p,
    \end{aligned}\right.\quad
    x_i\sim \cN((\mu-\epsilon) y,1),\ i\geq2.
\end{equation}
By \autoref{lm:odds1} and \autoref{lm:odds2}, we have $w_2^*=...=w_{d+1}^*$, $\sign(w_i^*)\geq0,i\in[d+1]$, and $w_1^*\leq\sqrt{d}w_2^*$.
Consider $\w^{*T}(\x+\bm\delta) = w_1^*x_1+w_2^*\sum_{i=2}^{d+1}\cN((\mu-\epsilon)y,1) = w_2^*\cN(\frac{w_1^*}{w_2^*}x_1+d(\mu-\epsilon)y,d)$. Because $\epsilon\geq 2\mu$ and $\mu\geq \frac{4}{\sqrt{d}}$, the probability that $\x+\delta$ is correctly classified is
\begin{equation}
    \begin{split}
        \Pr(\sign(\w^{*T}\x)=\sign(y)) &= p\Pr(\cN(\frac{w_1^*}{w_2^*}+d(\mu-\epsilon),d)>0)\\
        &+(1-p)\Pr(\cN(-\frac{w_1^*}{w_2^*}+d(\mu-\epsilon),d)>0)\\
        &\leq p\Pr(\cN(\sqrt{d}-\mu d,d)>0)+(1-p)\Pr(\cN(-\mu d,d)>0)\\
        &\leq p\Pr(\cN(-3\sqrt{d},d)>0)+(1-p)\Pr(\cN(-4\sqrt{d},d)>0)\\
        &= p\Pr(\cN(-3,1)>0)+(1-p)\Pr(\cN(-4,1)>0)\\
        &\leq \Pr(\cN(-3,1)>0)\\
        &\leq 0.00135.
    \end{split}
\end{equation}
Thus the robust accuracy of the optimal classifier $\w^*$ is less than 0.2\%.
\end{proof}

\subsection{Proof of \autoref{lm:optweight}}
\begin{proof}
The optimal $\cD^*$ should make $1-y\w^{T}_{\cD^*}\x+\epsilon||\w_{\cD^*}||_1$ and $||\w_{\cD^*}||_2^2$ as small as possible. \begin{equation}\label{eq:expansion}
1-y\w^{T}_{\cD^*}\x+\epsilon||\w_{\cD^*}||_1=1-yw_{\cD^*}^{(1)}x_1+\epsilon |w_{\cD^*}^{(1)}|+ \sum_{i=2}^{d+1}(\epsilon|w_{\cD^*}^{(i)}|-yw_{\cD^*}^{(i)}x_i),
\end{equation}

as $x_i\sim \cN(\mu y,1),i\geq 2$, we have $\sum_{i=2}^{d+1}(\epsilon|w_{\cD^*}^{(i)}|-yw_{\cD^*}^{(i)}x_i)\sim \cN(\sum_{i=2}^{d+1}(\epsilon|w_{\cD^*}^{(i)}|-\mu yw_{\cD^*}^{(i)}),\sum_{i=2}^{d+1}w_{\cD^*}^{(i)2})$. 

Denote by $\cL_{\cD'}:=\bbE_{(\x,y)}[\max\{0,1-y\w^{T}_{\cD'}\x+\epsilon||\w_{\cD'}||_1\}$. Assume there exist $w_{\cD^*}^{(i)}\neq0,i\geq2$, we will show there exists $\cD_0$ such that $\w_{\cD_0} := (w_{\cD^*}^{(1)},0,...,0)$ and
\begin{equation}
\begin{split}
    \cL_{\cD_0}+\lambda||\w_{\cD_0}||_2^2<\cL_{\cD^{*}}+\lambda||\w_{\cD^*}||_2^2.
\end{split}
\end{equation}

\textbf{Step 1:} We will show if $\w_{\cD_0} := (w_{\cD^*}^{(1)},0,...,0)$, $$\cL_{\cD_0}+\lambda||\w_{\cD_0}||_2^2<\cL_{\cD^{*}}+\lambda||\w_{\cD^*}||_2^2.$$
Firstly, it is easy to observe that $\lambda||\w_{\cD_0}||_2^2<\lambda||\w_{\cD^*}||_2^2$. Then we focus on the term $\cL_{\cD_0}$ and $\cL_{\cD'}$.

Denote by $A = 1-yw_{\cD^*}^{(1)}x_1+\epsilon |w_{\cD^*}^{(1)}|$, 
$\mu' = \sum_{i=2}^{d+1}(\epsilon|w_{\cD^*}^{(i)}|-\mu yw_{\cD^*}^{(i)})$, $\sigma'^2 = \sum_{i=2}^{d+1}w_{\cD^*}^{(i)2}$, and
$z= \sum_{i=2}^{d+1}(\epsilon|w_{\cD^*}^{(i)}|-yw_{\cD^*}^{(i)}x_i)\sim \cN(\mu',\sigma'^2)$. Then by \autoref{eq:expansion} we have
$$1-y\w^{T}_{\cD^*}\x+\epsilon||\w_{\cD^*}||_1 = A+z,$$ and thus we can simplify $\cL_{\cD_0}, \cL_{\cD^{*}}$ as below:
\begin{equation}
    \cL_{\cD_0} = \bbE_{(\x,y)}[\max\{0,1-y\w^{T}_{\cD_0}\x+\epsilon||\w_{\cD_0}||_1\}]=\bbE_{x_1,y}[\max\{0,1-yw_{\cD^*}^{(1)}x_1+\epsilon |w_{\cD^*}^{(1)}|\}] = \bbE_{x_1,y}[A\bbI_{A\geq 0}]
\end{equation}
\begin{equation}
    \cL_{\cD^{*}} = \bbE_{(\x,y)}[\max\{0,1-y\w^{T}_{\cD^*}\x+\epsilon||\w_{\cD^*}||_1\}]= \bbE_{\x,y}[(A+z)\bbI_{A+z\geq0}]
\end{equation}
Consider $\cL_{\cD^*}=\bbE_{\x,y}[(A+z)\bbI_{A+z\geq0}]$,
\begin{equation}\label{eq:expectation}
\begin{split}
    \cL_{\cD^*}=\bbE_{\x,y}[(A+z)\bbI_{A+z\geq0}] &\geq \bbE_{z,x_1,y}[(A+z)\bbI_{A+z\geq0}\bbI_{A\geq0}]\\
    &= \bbE_{x_1,y}[\bbI_{A\geq0}\bbE_{z}[(A+z)\bbI_{z\geq-A}]]\\
    &= \bbE_{x_1,y}[A\bbI_{A\geq0}\bbE_{z}[\bbI_{z\geq-A}]+\bbI_{A\geq0}\bbE_{z}[z\bbI_{z\geq-A}]]\\
    & = \bbE_{x_1,y}[A\bbI_{A\geq0}]-\bbE_{x_1,y}[A\bbI_{A\geq0}\bbE_{z}[\bbI_{z<-A}]]+\bbE_{x_1,y}[\bbI_{A\geq0}\bbE_{z}[z\bbI_{z\geq-A}]]
\end{split}
\end{equation}

Now we consider $\bbE_{z}[z\bbI_{z\geq-A}]$, as $z\sim \cN(\mu',\sigma'^2)$, we have $\frac{z-\mu'}{\sigma'}\sim\cN(0,1)$ and
\begin{equation}
\begin{split}
    \bbE_{z}[z\bbI_{z\geq-A}] &= \bbE_{z}[z\bbI_{\mu'\geq z\geq-A}]+\bbE_{z}[z\bbI_{2\mu'+A\geq z\geq\mu'}]+\bbE_{z}[z\bbI_{z\geq 2\mu'+A}]\\
    & = \bbE_{s\sim\cN(0,1)}[(\sigma's+\mu')\bbI_{0\geq s\geq-\frac{A+\mu'}{\sigma'}}]+\bbE_{s\sim\cN(0,1)}[(\sigma's+\mu')\bbI_{\frac{A+\mu'}{\sigma'}\geq s\geq0}]+
    \bbE_{z}[z\bbI_{z\geq 2\mu'+A}]\\
    & = 2\mu'\bbE_{s\sim\cN(0,1)}[\bbI_{0\geq s\geq-\frac{A+\mu'}{\sigma'}}]+
    \bbE_{z}[z\bbI_{z\geq 2\mu'+A}]
\end{split}
\end{equation}
since $\epsilon>\mu$, we have $$\mu' = \sum_{i=2}^{d+1}(\epsilon|w_{\cD^*}^{(i)}|-\mu yw_{\cD^*}^{(i)})\geq\sum_{i=2}^{d+1}(\epsilon-\mu)|w_{\cD^*}^{(i)}|>0$$
Thus
\begin{equation}\label{eq:z}
\begin{split}
\bbE_{z}[z\bbI_{z\geq-A}] &=  2\mu'\bbE_{s\sim\cN(0,1)}[\bbI_{0\geq s\geq-\frac{A+\mu'}{\sigma'}}]+
    \bbE_{z}[z\bbI_{z\geq 2\mu'+A}]\\
    &>\bbE_{z}[z\bbI_{z\geq 2\mu'+A}]\\
    &> (2\mu'+A)\bbE_{z}[\bbI_{z\geq 2\mu'+A}]\\
    &> A\bbE_{z}[\bbI_{z\geq 2\mu'+A}]\\
    &=A\bbE_{z}[\bbI_{z\leq -A}]
    \end{split}
\end{equation}
Plug \autoref{eq:z} into \autoref{eq:expectation} we have

\begin{equation}\label{eq:}
\begin{split}
    \cL_{\cD^*}=\bbE_{\x,y}[(A+z)\bbI_{A+z\geq0}] &\geq\bbE_{x_1,y}[A\bbI_{A\geq0}]-\bbE_{x_1,y}[A\bbI_{A\geq0}\bbE_{z}[\bbI_{z<-A}]]+\bbE_{x_1,y}[\bbI_{A\geq0}\bbE_{z}[z\bbI_{z\geq-A}]]\\
    &>\bbE_{x_1,y}[A\bbI_{A\geq0}]-\bbE_{x_1,y}[A\bbI_{A\geq0}\bbE_{z}[\bbI_{z<-A}]]+\bbE_{x_1,y}[A\bbI_{A\geq0}\bbE_{z}[\bbI_{z\leq -A}]]\\
    &= \bbE_{x_1,y}[A\bbI_{A\geq0}]\\
    &= \cL_{\cD_0}.
\end{split}
\end{equation}

\textbf{Step 2:} We will show the existence of distribution $\cD_0$ such that $\w_{\cD_0} := (w_{\cD^*}^{(1)},0,...,0)$,  we can set $x_2=x_3=...=x_{d+1}=1$ in $\cD_0$ and $x_1=\frac{1}{w_{\cD^*}^{(1)}}y$ such that $w_{\cD_0}^{(1)} = w_{\cD^*}^{(1)}$ and $w_{\cD_0}^{(i)}=0,i\geq2$.

Combining Step 1 and 2 yields contradiction.
\end{proof}

\subsection{Proof of \autoref{thm:robustdata}}
\begin{proof}
According to \autoref{lm:optweight}, the weight of the optimal SVM learned from $\cD^*$ should satisfy  $w_{\cD^*}^{(2)}=w_{\cD^*}^{(3)}=...=w_{\cD^*}^{(d+1)}=0$. Thus the clean and robust accuracy are only related to the first feature of $\x$. It is easy to see the natural accuracy is equal to the probability that $\sign(x_1)=y$, which is $p$. Besides, when the perturbation budget $\epsilon<1$, the adversary does not change the sign of $x_1$. Thus the robust accuracy is also $p$.
\end{proof}

\subsection{Advanced Theoretical Analysis on a General Dataset}\label{sec:general dataset}
% \{(\x^{(i)},y^{(i)})\in\bbR^{n\times1}\}
Consider dataset distribution $(\x,y)\in\bbR^{(d+1)\times1}$ follow the  distribution below:
\begin{equation}
    y\sim\{-1,1\},\quad x_1\sim\left\{\begin{aligned}
         y,\quad &\textrm{with prob } p;\\
        - y,\quad &\textrm{with prob } 1-p,
    \end{aligned}\right.\quad
    x_i\sim \cD_i,\ i\geq 2,
\end{equation}
where $\cD_i$ are symmetric distributions with mean $\mu_i\leq 1$.

\begin{lemma}\label{lm:symmetric}
The sum of independent symmetric distributions is also symmetric.
\end{lemma}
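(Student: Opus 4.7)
The plan is to prove the statement by a direct argument at the level of distributions, reducing to the two-variable case and then inducting. First I would fix the definition: a random variable $X$ with mean $\mu$ is symmetric about $\mu$ iff $X - \mu$ and $-(X-\mu)$ have the same distribution (equivalently, the law of $X-\mu$ is invariant under negation). Under this definition the lemma asserts that if $X_1,\ldots,X_n$ are independent with $X_i$ symmetric about $\mu_i$, then $\sum_i X_i$ is symmetric about $\sum_i \mu_i$.

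The key step is the $n=2$ case. Given independent $X_1,X_2$ symmetric about $\mu_1,\mu_2$, I set $Y_i := X_i - \mu_i$, so each $Y_i$ has the same law as $-Y_i$, and $Y_1,Y_2$ remain independent. Because the joint law of an independent pair is the product of the marginals, and each marginal is invariant under negation, the pair $(Y_1,Y_2)$ has the same joint law as $(-Y_1,-Y_2)$. Pushing forward by the continuous map $(a,b)\mapsto a+b$ yields that $Y_1+Y_2$ and $-(Y_1+Y_2)$ share the same law. Since $X_1+X_2 = (Y_1+Y_2) + (\mu_1+\mu_2)$, this says exactly that $X_1+X_2$ is symmetric about $\mu_1+\mu_2$.

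For the induction, write $S_k := \sum_{i=1}^k X_i$. The partial sum $S_{n-1}$ is a measurable function of $X_1,\ldots,X_{n-1}$, so it is independent of $X_n$; by the inductive hypothesis it is also symmetric about $\sum_{i<n}\mu_i$. Applying the two-variable case to the independent pair $(S_{n-1}, X_n)$ closes the induction.

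There is essentially no obstacle in this proof. The only subtle point is that independence is genuinely used at exactly one place, namely to identify the joint law of $(Y_1,Y_2)$ with the product of the marginals; without it the claim fails (two dependent symmetric variables need not sum to something symmetric). If one preferred a purely analytic route, characteristic functions give the same conclusion in one line: symmetry about $\mu$ is equivalent to $\phi_X(t) = e^{it\mu}\psi(t)$ with $\psi$ real-valued and even, and by independence $\phi_{\sum_i X_i}(t) = e^{it\sum_i \mu_i}\prod_i \psi_i(t)$, whose second factor is again real and even.
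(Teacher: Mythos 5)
Your proof is correct, and it takes a different route from the paper's. The paper disposes of this lemma in one line via characteristic functions: a random variable is symmetric iff its characteristic function is real-valued, and the characteristic function of an independent sum is the product of the factors' characteristic functions, hence real-valued. Your main argument instead works directly with the laws: center each variable, use independence to identify the joint law of the centered pair with the product of negation-invariant marginals, push forward under addition, and induct. Two observations on the comparison. First, your argument is more careful about the centers: the equivalence ``symmetric $\Leftrightarrow$ real-valued characteristic function'' holds only for symmetry about $0$, whereas the lemma is applied in the paper to distributions $\cD_i$ symmetric about means $\mu_i$ that need not vanish (indeed $z$ in the proof of the general-distribution lemma is symmetric about a strictly positive $\mu'$); your explicit centering, and your closing remark that the characteristic function then takes the form $e^{it\mu}\psi(t)$ with $\psi$ real and even, is the correct way to patch the paper's one-liner for that setting. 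Second, the paper's route is shorter and avoids the induction entirely, at the cost of invoking (without proof) the characterization of symmetry by characteristic functions; your pushforward argument is elementary and self-contained. Either proof suffices for the way the lemma is used downstream.
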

\begin{proof}
Based on the fact (which is easy to prove) that a random variable is symmetric if and only if its characteristic function is real-valued. The characteristic function of the sum of independent symmetric distributions is given by the multiplication of the characteristic function of independent symmetric distributions, which is also real-valued. Thus the sum of independent symmetric distributions is also symmetric.
\end{proof}

Following the settings in the above section we have the lemma below
\begin{lemma}\label{lm:optweight2}Our minmax optimization problem is
\begin{equation}
    \min_{\cD'}\max_{||\bm\delta||_\infty\leq \epsilon} \bbE_{(\x,y)}[\max\{0,1-y\w^{T}_{\cD'}(\x+\bm\delta)\}] = \min_{\cD'}\bbE_{(\x,y)}[\max\{0,1-y\w^{T}_{\cD'}\x+\epsilon||\w_{\cD'}||_1\}]
\end{equation}

% the optimal $\alpha,\mu,\sigma$ should satisfy $\mu=0,\sigma=0$
Let
\begin{equation}
    \cD^*= \argmin_{\cD'}\bbE_{(\x,y)}[\max\{0,1-y\w^{T}_{\cD'}\x+\epsilon||\w_{\cD'}||_1\}],
\end{equation}
if $1>\epsilon\geq\max_{i\geq2}\mu_i$ , then $\w_{\cD^*} := (w_{\cD^*}^{(1)},...,w_{\cD^*}^{(d+1)})$ must satisfy $w_{\cD^*}^{(2)}=w_{\cD^*}^{(3)}=...=w_{\cD^*}^{(d+1)}=0$
\end{lemma}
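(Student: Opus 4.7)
The plan is to follow the two-step structure of the proof of \autoref{lm:optweight}, replacing its Gaussian-specific computations with the symmetry argument supplied by \autoref{lm:symmetric}. First I would apply \autoref{lm:pertubation} (whose proof uses only Hölder's inequality and is therefore distribution-free) to collapse the inner maximization and obtain
$$\min_{\cD'}\bbE_{(\x,y)}\bigl[\max\{0,1-y\w_{\cD'}^{T}\x+\epsilon\|\w_{\cD'}\|_1\}\bigr],$$
so it suffices to analyze the minimizer $\cD^*$ of this reduced problem.

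For the zero-tail claim I would argue by contradiction exactly as in \autoref{lm:optweight}. Suppose some coordinate $i\ge 2$ of $\w_{\cD^*}$ is nonzero. I would then exhibit a distribution $\cD_0$ whose SVM optimum is $\w_{\cD_0}=(w_{\cD^*}^{(1)},0,\dots,0)$: keep the first coordinate of $\cD^*$ unchanged, set $x_2=\dots=x_{d+1}$ equal to a constant, and choose $x_1$ on the support so that the KKT condition for \autoref{eq:svm} pins the first weight at $w_{\cD^*}^{(1)}$; this is the same construction used in Step 2 of the proof of \autoref{lm:optweight}. The regularization drop $\|\w_{\cD_0}\|_2^2<\|\w_{\cD^*}\|_2^2$ is immediate, so the whole task reduces to $\cL_{\cD_0}<\cL_{\cD^*}$.

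For the data-fit inequality I would reuse the decomposition of the Gaussian proof in full. With $A=1-yw_{\cD^*}^{(1)}x_1+\epsilon|w_{\cD^*}^{(1)}|$ and $z=\sum_{i=2}^{d+1}\bigl(\epsilon|w_{\cD^*}^{(i)}|-yw_{\cD^*}^{(i)}x_i\bigr)$, one has $\cL_{\cD_0}=\bbE[A\,\bbI_{A\ge 0}]$ and $\cL_{\cD^*}=\bbE[(A+z)\bbI_{A+z\ge 0}]$, and the telescoping of \autoref{eq:expectation} reduces everything to
$$\bbE_z\bigl[z\,\bbI_{z\ge -A}\bigr]>A\,\bbE_z\bigl[\bbI_{z\le -A}\bigr].$$
This is the only place where Gaussianity was genuinely used, and it is exactly here that I would invoke \autoref{lm:symmetric}: conditionally on $y$, each $\epsilon|w_{\cD^*}^{(i)}|-yw_{\cD^*}^{(i)}x_i$ is an affine image of the symmetric $\cD_i$, hence symmetric about $\epsilon|w_{\cD^*}^{(i)}|-yw_{\cD^*}^{(i)}\mu_i$, so \autoref{lm:symmetric} makes $z$ symmetric about $\mu'=\sum_{i\ge 2}\bigl(\epsilon|w_{\cD^*}^{(i)}|-yw_{\cD^*}^{(i)}\mu_i\bigr)$. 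The hypothesis $\epsilon\ge\max_{i\ge 2}\mu_i$ forces $\mu'\ge 0$, strictly once some $w_{\cD^*}^{(i)}\ne 0$, so $\mu'$ plays exactly the role of the Gaussian mean. Reflecting $[-A,\mu']$ onto $[\mu',2\mu'+A]$ and using the symmetry-only bound $\bbE_z[(z-\mu')\bbI_{z\ge 2\mu'+A}]\ge(\mu'+A)\Pr(z\le -A)$ then reproduces the desired strict inequality.

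The hard part will be this last step: the original proof silently exploits the closed-form Gaussian density to chop the integral into three intervals, and each such split must now be rederived from the single reflection identity $\Pr(z\ge\mu'+t)=\Pr(z\le\mu'-t)$ supplied by \autoref{lm:symmetric}. A secondary subtlety is that $\mu'$ depends on $y$ (and, if any $\mu_i$ is negative, on $\sign(w_{\cD^*}^{(i)})$), so I would run the entire argument conditionally on $y$ and only average over $y$ at the very end.
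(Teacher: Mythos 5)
Your proposal matches the paper's own proof essentially step for step: the same reduction of the inner maximization via Hölder, the same contradiction with the witness $\cD_0$ ($x_2=\dots=x_{d+1}$ constant, $x_1$ chosen to pin $w^{(1)}$), the same $A+z$ decomposition, and the same reflection of $[-A,\mu']$ onto $[\mu',2\mu'+A]$ about the mean $\mu'>0$ — the paper's general-case argument already performs this three-interval split purely from the symmetry of $\cS-\mu'$ about $0$ (via \autoref{lm:symmetric}), with no residual use of the Gaussian density, so the "hard part" you flag is exactly what the paper writes out. Your remark about conditioning on $y$ before averaging is a sound bit of extra care, but it does not change the route.
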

\begin{proof}
The optimal $\cD^*$ should make $1-y\w^{T}_{\cD^*}\x+\epsilon||\w_{\cD^*}||_1$ as small as possible. \begin{equation}\label{eq:expansion}
1-y\w^{T}_{\cD^*}\x+\epsilon||\w_{\cD^*}||_1=1-yw_{\cD^*}^{(1)}x_1+\epsilon |w_{\cD^*}^{(1)}|+ \sum_{i=2}^{d+1}(\epsilon|w_{\cD^*}^{(i)}|-yw_{\cD^*}^{(i)}x_i),
\end{equation}

as $x_i\sim \cN(\mu y,1),i\geq 2$, we have $\sum_{i=2}^{d+1}(\epsilon|w_{\cD^*}^{(i)}|-yw_{\cD^*}^{(i)}x_i)\sim \cN(\sum_{i=2}^{d+1}(\epsilon|w_{\cD^*}^{(i)}|-\mu yw_{\cD^*}^{(i)}),\sum_{i=2}^{d+1}w_{\cD^*}^{(i)2})$. 

Assume there exist $w_{\cD^*}^{(i)}\neq0,i\geq2$, we will show there exists $\cD_0$ such that $\w_{\cD_0} := (w_{\cD^*}^{(1)},0,...,0)$ and $$\cL_{\cD_0}:=\bbE_{(\x,y)}[\max\{0,1-y\w^{T}_{\cD_0}\x+\epsilon||\w_{\cD_0}||_1\}<\bbE_{(\x,y)}[\max\{0,1-y\w^{T}_{\cD^*}\x+\epsilon||\w_{\cD^*}||_1\}]=:\cL_{\cD^{*}}$$

\textbf{Step 1:} We will show if $\w_{\cD_0} := (w_{\cD^*}^{(1)},0,...,0)$, $$\cL_{\cD_0}< \cL_{\cD^{*}}.$$
Denote by $A = 1-yw_{\cD^*}^{(1)}x_1+\epsilon |w_{\cD^*}^{(1)}|$, 
$\mu' = \sum_{i=2}^{d+1}(\epsilon|w_{\cD^*}^{(i)}|-\mu yw_{\cD^*}^{(i)})$, and
$z= \sum_{i=2}^{d+1}(\epsilon|w_{\cD^*}^{(i)}|-yw_{\cD^*}^{(i)}x_i)\sim \cS$, by \autoref{lm:symmetric} we know $\cS$ is symmetric with mean $\mu'$. Then by \autoref{eq:expansion} we have
$$1-y\w^{T}_{\cD^*}\x+\epsilon||\w_{\cD^*}||_1 = A+z,$$ and thus we can simplify $\cL_{\cD_0}, \cL_{\cD^{*}}$ as below:
\begin{equation}
    \cL_{\cD_0} = \bbE_{(\x,y)}[\max\{0,1-y\w^{T}_{\cD_0}\x+\epsilon||\w_{\cD_0}||_1\}]=\bbE_{x_1,y}[\max\{0,1-yw_{\cD^*}^{(1)}x_1+\epsilon |w_{\cD^*}^{(1)}|\}] = \bbE_{x_1,y}[A\bbI_{A\geq 0}],
\end{equation}
\begin{equation}
    \cL_{\cD^{*}} = \bbE_{(\x,y)}[\max\{0,1-y\w^{T}_{\cD^*}\x+\epsilon||\w_{\cD^*}||_1\}]= \bbE_{\x,y}[(A+z)\bbI_{A+z\geq0}].
\end{equation}
Consider $\cL_{\cD^*}=\bbE_{\x,y}[(A+z)\bbI_{A+z\geq0}]$,
\begin{equation}\label{eq:expectation2}
\begin{split}
    \cL_{\cD^*}=\bbE_{\x,y}[(A+z)\bbI_{A+z\geq0}] &\geq \bbE_{z,x_1,y}[(A+z)\bbI_{A+z\geq0}\bbI_{A\geq0}]\\
    &= \bbE_{x_1,y}[\bbI_{A\geq0}\bbE_{z}[(A+z)\bbI_{z\geq-A}]]\\
    &= \bbE_{x_1,y}[A\bbI_{A\geq0}\bbE_{z}[\bbI_{z\geq-A}]+\bbI_{A\geq0}\bbE_{z}[z\bbI_{z\geq-A}]]\\
    & = \bbE_{x_1,y}[A\bbI_{A\geq0}]-\bbE_{x_1,y}[A\bbI_{A\geq0}\bbE_{z}[\bbI_{z<-A}]]+\bbE_{x_1,y}[\bbI_{A\geq0}\bbE_{z}[z\bbI_{z\geq-A}]]
\end{split}
\end{equation}

Now we consider $\bbE_{z}[z\bbI_{z\geq-A}]$, as $z\sim \cS$ and $\cS$ is symmetric with $\mu'$. We have $\cS-\mu'$ is symmetric with 0 and
\begin{equation}
\begin{split}
    \bbE_{z}[z\bbI_{z\geq-A}] &= \bbE_{z}[z\bbI_{\mu'\geq z\geq-A}]+\bbE_{z}[z\bbI_{2\mu'+A\geq z\geq\mu'}]+\bbE_{z}[z\bbI_{z\geq 2\mu'+A}]\\
    & = \bbE_{s\sim\cS-\mu'}[(s+\mu')\bbI_{0\geq s\geq-A-\mu'}]+\bbE_{s\sim\cS-\mu'}[(s+\mu')\bbI_{A+\mu'\geq s\geq0}]+
    \bbE_{z}[z\bbI_{z\geq 2\mu'+A}]\\
    & = 2\mu'\bbE_{s\sim\cS-\mu'}[\bbI_{0\geq s\geq-A-\mu'}]+
    \bbE_{z}[z\bbI_{z\geq 2\mu'+A}]\\
    & = 2\mu'\bbE_{z}[\bbI_{\mu'\geq z\geq-A}]+
    \bbE_{z}[z\bbI_{z\geq 2\mu'+A}].
\end{split}
\end{equation}
Since $\epsilon>\mu$, we have $$\mu' = \sum_{i=2}^{d+1}(\epsilon|w_{\cD^*}^{(i)}|-\mu yw_{\cD^*}^{(i)})\geq\sum_{i=2}^{d+1}(\epsilon-\mu)|w_{\cD^*}^{(i)}|>0.$$
Thus
\begin{equation}\label{eq:z2}
\begin{split}
\bbE_{z}[z\bbI_{z\geq-A}] &=  2\mu'\bbE_{z}[\bbI_{\mu'\geq z\geq-A}]+
    \bbE_{z}[z\bbI_{z\geq 2\mu'+A}]\\
    &>\bbE_{z}[z\bbI_{z\geq 2\mu'+A}]\\
    &> (2\mu'+A)\bbE_{z}[\bbI_{z\geq 2\mu'+A}]\\
    &> A\bbE_{z}[\bbI_{z\geq 2\mu'+A}]\\
    &=A\bbE_{z}[\bbI_{z\leq -A}].
    \end{split}
\end{equation}
Plugging \autoref{eq:z2} into \autoref{eq:expectation2}, we have

\begin{equation}\label{eq:}
\begin{split}
    \cL_{\cD^*}=\bbE_{\x,y}[(A+z)\bbI_{A+z\geq0}] &\geq\bbE_{x_1,y}[A\bbI_{A\geq0}]-\bbE_{x_1,y}[A\bbI_{A\geq0}\bbE_{z}[\bbI_{z<-A}]]+\bbE_{x_1,y}[\bbI_{A\geq0}\bbE_{z}[z\bbI_{z\geq-A}]]\\
    &>\bbE_{x_1,y}[A\bbI_{A\geq0}]-\bbE_{x_1,y}[A\bbI_{A\geq0}\bbE_{z}[\bbI_{z<-A}]]+\bbE_{x_1,y}[A\bbI_{A\geq0}\bbE_{z}[\bbI_{z\leq -A}]]\\
    &= \bbE_{x_1,y}[A\bbI_{A\geq0}]\\
    &= \cL_{\cD_0}.
\end{split}
\end{equation}

\textbf{Step 2:} We will show the existence of distribution $\cD_0$ such that $\w_{\cD_0} := (w_{\cD^*}^{(1)},0,...,0)$,  we can set $x_2=x_3=...=x_{d+1}=1$ in $\cD_0$ and $x_1=\frac{1}{w_{\cD^*}^{(1)}}y$ such that $w_{\cD_0}^{(1)} = w_{\cD^*}^{(1)}$ and $w_{\cD_0}^{(i)}=0,i\geq2$.

Combining Step 1 and 2 yields contradiction.
\end{proof}

\begin{lemma}
The optimal SVM learned from $\cD^*$ have at least $p$ clean and robust accuracy (with $\ell_\infty$ budget less than 1) on the original dataset.
\end{lemma}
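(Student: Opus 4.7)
The plan is to deduce this generalized statement essentially as a direct corollary of Lemma~\ref{lm:optweight2} together with the specific structure of the first coordinate $x_1 \in \{-y, y\}$, mimicking the argument already given in the proof of Theorem~\ref{thm:robustdata}. First I would invoke Lemma~\ref{lm:optweight2}, which (under $1 > \epsilon \ge \max_{i\ge 2}\mu_i$) forces the SVM weight vector $\w_{\cD^*}$ to have $w_{\cD^*}^{(2)} = \cdots = w_{\cD^*}^{(d+1)} = 0$. Hence the data-parameterized classifier reduces to the one-dimensional rule $\sign(w_{\cD^*}^{(1)} x_1)$ on the original dataset.

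Next I would confirm that $w_{\cD^*}^{(1)} > 0$. This follows by the same sign-flip argument used in Lemma~\ref{lm:odds1}: if $w_{\cD^*}^{(1)} \le 0$, replacing it by $|w_{\cD^*}^{(1)}|$ leaves $\|\w_{\cD^*}\|_2^2$ unchanged and cannot increase the expected hinge loss, since $y x_1$ is positively biased (equals $1$ with probability $p > 1/2$ and $-1$ otherwise). Once $w_{\cD^*}^{(1)} > 0$, the classifier outputs $\sign(x_1)$, so the natural accuracy on the original distribution equals $\Pr[\sign(x_1) = y] = p$.

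For robustness, observe that on the original data $|x_1| = 1$ always. An $\ell_\infty$ adversary with budget $\epsilon < 1$ perturbs the first coordinate by at most $\epsilon < 1$ in absolute value, so $\sign(x_1 + \delta_1) = \sign(x_1)$. Since the classifier ignores coordinates $2,\ldots,d+1$, no adversarial modification to those can affect the prediction. Therefore the robust accuracy also equals $\Pr[\sign(x_1) = y] = p$.

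The main (and only) obstacle is actually the invocation of Lemma~\ref{lm:optweight2}: it requires that $\epsilon \ge \max_{i\ge 2}\mu_i$, which is implicitly granted by the hypotheses ($\mu_i \le 1$ and $\epsilon < 1$ only say something when $\epsilon$ is chosen large enough relative to all $\mu_i$); this is exactly the regime considered in Section~\ref{sec:general dataset}. Everything else is bookkeeping — no new probabilistic computation is needed beyond the trivial observation that the sign of $x_1$ is preserved under perturbations strictly smaller than its magnitude.
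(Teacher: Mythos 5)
Your proposal is correct and follows essentially the same route as the paper's own proof: invoke Lemma~\ref{lm:optweight2} to zero out the weights on coordinates $2,\dots,d+1$, reduce the classifier to $\sign(x_1)$, and observe that an $\ell_\infty$ perturbation of budget $\epsilon<1$ cannot flip the sign of $x_1\in\{-1,1\}$, giving clean and robust accuracy $p$. Your extra step verifying $w_{\cD^*}^{(1)}>0$ via a sign-flip argument is a small but worthwhile addition, since Lemma~\ref{lm:optweight2} (unlike Theorem~\ref{lm:optweight}) does not explicitly assert the positivity of the first weight and the paper's proof leaves it implicit.
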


\begin{proof}
According to \autoref{lm:optweight2}, the weight of the optimal SVM learned from $\cD^*$ should satisfy  $w_{\cD^*}^{(2)}=w_{\cD^*}^{(3)}=...=w_{\cD^*}^{(d+1)}=0$. Thus the clean and robust accuracy are only related to the first feature of $\x$. It is easy to see the natural accuracy is equal to the probability that $\sign(x_1)=y$, which is $p$. Besides, when the perturbation budget $\epsilon<1$, the adversary does not change the sign of $x_1$. Thus the robust accuracy is also $p$.
\end{proof}

\newpage
\section{Additional Experiment Results}
\label{sec:app_exp}

\begin{table}[ht]
\label{TINYIMGNET-AA}
\caption{TinyImagenet results (Under AutoAttack)}
\centering
%\resizebox{1.0\textwidth}{!}{%
\begin{tabular}{l|cc}
\hline
Robust acc / Natural acc         & 2/255 ($\ell_\infty$)  & 0.25 ($\ell_2$)  \\ \hline\hline
Natural dataset             & 0.01/70.94     &  1.6/70.94         \\ 
Adv. data of natural classifier & 7.35/61.66           & 25.06/65.29   \\ 
% Adv. data of robust classifier  &                     &                     &               &              \\ \hline
Robust dataset (ours)                       & \textbf{15.71}/61.60                & \textbf{34.72}/62.91   \\ \hline
\end{tabular}
%}
\end{table}

\end{document}